\newcommand{\argmin}{\mathop{\mathrm{argmin}}}
\newcommand{\diag}[1]{\mathrm{diag}\left(#1\right)}
\newcommand{\Eqref}[1]{Eq. \eqref{#1}}
\newcommand{\boldone}{{\boldsymbol{1}}}
\newcommand{\calA}{{\mathcal A}}
\newcommand{\calB}{{\mathcal B}}
\newcommand{\calC}{{\mathcal C}}
\newcommand{\calE}{{\mathcal E}}
\newcommand{\calF}{{\mathcal F}}
\newcommand{\calG}{{\mathcal G}}
\newcommand{\calH}{{\mathcal H}}
\newcommand{\calL}{{\mathcal L}}
\newcommand{\calN}{{\mathcal N}}
\newcommand{\calX}{{\mathcal X}}
\newcommand{\fhat}{\widehat{f}}
\newcommand{\Px}{P_{X}}
\newcommand{\Py}{P_{Y}}
\newcommand{\LPi}{L_2} 
\newcommand{\LPiPx}{L_2(\Px)}
\newcommand{\Real}{\mathbb{R}}
\newcommand{\Natural}{\mathbb{N}}
\newcommand{\EE}{\mathrm{E}}
\newcommand{\dd}{\mathrm{d}}
\newcommand{\fstar}{f^{\ast}}
\newcommand{\gstar}{g^*}
\def\I<#1>{\left\langle #1 \right\rangle}
\def\i<#1>{\left\langle #1 \right\rangle}
\newcommand{\Var}{\mathrm{Var}}
\newcommand{\supp}{\mathrm{supp}}
\newcommand{\calLhat}{\widehat{\calL}}
\newcommand{\idmap}{\mathbb{I}}
\newcommand{\cH}{\mathcal{H}}
\newcommand{\cHK}{\mathcal{H}_K}
\newcommand{\calW}{\mathcal{W}}
\newcommand{\norm}[1]{\|#1\|}
\newcommand{\sign}{\mathrm{sign}}
\newcommand{\Wstar}{W^*}
\newcommand{\betas}{s}
\newcommand{\cHKtil}{\mathcal{H}_{\tilde{K}}}
\newcommand{\nutil}{\tilde{\nu}_{\beta}}
\newcommand{\cHKtilt}{\mathcal{H}_{\tilde{K}^\theta}}
\newcommand{\alphatil}{\tilde{\alpha}}
\newcommand{\ustar}{u^*}
\newcommand{\lambdabeta}{\lambda_\beta}
\newcommand{\epsilonstar}{{\epsilon^*}}
\newcommand{\calWt}{\widetilde{\calW}}
\newcommand{\hstar}{{h^*}}
\newcommand{\CLRD}{C_{(L,R,D)}}
\newtheorem{Theorem}{Theorem}
\newtheorem{Lemma}{Lemma}
\newtheorem{Assumption}{Assumption}
\newtheorem{Proposition}{Proposition}
\newtheorem{Remark}{Remark}
\newlist{assumenum}{enumerate}{1} 
\setlist[assumenum]{label={\rm (\roman*)}, ref=(\roman*)}
\newcommand{\Tr}{\mathrm{Tr}}
\newcommand{\Rbar}{\bar{R}}
\title{
Generalization bound of globally optimal non-convex neural network training: Transportation map estimation by infinite dimensional Langevin dynamics
}
\author{
Taiji Suzuki \\
The University of Tokyo, Tokyo, Japan \\
RIKEN Center for Advanced Intelligence Project, Tokyo, Japan \\
\texttt{taiji@mist.i.u-tokyo.ac.jp} \\
}
\begin{document}

\maketitle


\begin{abstract}
We introduce a new theoretical framework to analyze deep learning optimization with connection to its generalization error.
Existing frameworks such as mean field theory and neural tangent kernel theory 
for neural network optimization analysis 
typically require taking limit of infinite width of the network to show its global convergence.
This potentially makes it difficult to directly deal with finite width network; especially in the neural tangent kernel regime, we cannot reveal favorable properties of neural networks beyond kernel methods. 
To realize more natural analysis, we consider a completely different approach in which 
we formulate the parameter training as a transportation map estimation and show its global convergence via the theory of the {\it infinite dimensional Langevin dynamics}.
This enables us to analyze narrow and wide networks in a unifying manner.
Moreover, we give generalization gap and excess risk bounds for the solution obtained by the dynamics.
The excess risk bound achieves the so-called fast learning rate. 
In particular, we show an exponential convergence for a classification problem and a minimax optimal rate for a regression problem.


\end{abstract}

\section{Introduction}

Despite the extensive empirical success of deep learning, there are several missing issues in theoretical understanding of its optimization and generalizations. Even though there are several theoretical analyses on its generalization error and  representation ability \cite{COLT:Neyshabur+Tomioka+Srebro:2015,bartlett2017spectrally,pmlr-v80-arora18b,suzuki2020compression,AoS:Schmidt-Hieber:2020},
they are not necessarily well connected with an optimization procedure. 
The biggest difficulty in neural network optimization lies in its non-convexity.
Recently, this difficulty of non-convexity is partly resolved by considering infinite width limit of networks as performed in 
{\it mean field theory} \cite{sirignano2018mean,MeiE7665} and {\it Neural Tangent Kernel} (NTK) \cite{jacot2018neural,du2019gradient}.
These analyses deal with different scaling of parameters for taking the limit of the width, but they share a similar spirit that an appropriate gradient descent direction can be found in an over-parameterized setting until convergence.

The mean field analysis formulates the neural network training as a gradient flow in the space of probability measures over the weights.
The gradient flow corresponding to a deterministic dynamics of the weights can be analyzed 
as an interacting particle system
\cite{nitanda2017prticle,chizat2018note,NIPS:Rotskoff&Vanden-Eijnden:2018,rotskoff2019trainability}. 
On the other hand, a stochastic dynamics of an interacting particle system can be formulated as McKean–Vlasov dynamics,
and convergence to the global optimal is ensured by the ergodicity of this dynamics
\cite{MeiE7665,pmlr-v99-mei19a}.
Intuitively, inducing stochastic noise makes the solution easier to get out of local optimal and facilitates convergence to the global optimal.

The second regime, NTK, deals with larger scaling than the mean field regime, and the gradient descent dynamics is approximated by that in the tangent space at the initial solution \cite{jacot2018neural,du2018gradient,allen2019convergence,du2019gradient,arora2019fine}.
That is, in the wide limit of the neural network, the gradient descent can be seen as that in an reproducing kernel Hilbert space (RKHS) corresponding to the neural tangent kernel, which resolves the difficulty of non-convexity.  
Actually, it is shown that the gradient descent converges to the zero error solution exponentially fast for a sufficiently large width network \cite{du2018gradient,allen2019convergence,du2019gradient}. 
In addition to the optimization, its generalization error has been also extensively studied in the NTK regime \cite{du2018gradient,allen2019convergence,du2019gradient,weinan2019comparative,cao2019generalization,cao2019generalization_b,zou2019improved,oymak2020towards,nitanda2019refined,ji2019polylogarithmic}.
On the other hand, \cite{HAYAKAWA2020343} pointed out that non-convexity of a deep neural network model is essential to show superiority of deep learning over linear estimators such as kernel methods as in the analysis of \cite{suzuki2018adaptivity,pmlr-v89-imaizumi19a,suzuki2019deepIntrinsicDim}. Therefore, the NTK regime would not be  appropriate to show superiority of deep learning over other methods such as kernel methods. 

The above mentioned researches opened up new directions for analyzing deep learning optimization.
However, all of them require that the width should diverge as the sample size goes up to show the global convergence and obtain generalization error bounds.
On the other hand, a convergence guarantee for ``fixed width'' training is still difficult and we have not obtained a satisfactory result that can bridge both of under-parameterized and over-parameterized settings in a {\it unifying manner}.
One way to tackle non-convexity in a finite width situation would be stochastic gradient Langevin dynamics (SGLD) \cite{Welling_Teh11,Raginsky_Rakhlin_Telgarsky2017,NIPS2018_8175}.
This would be useful to show the global convergence for the non-convex optimization in deep leaning.
However, the convergence rate depends exponentially to the dimensionality, which is not realistic to analyzing neural network training that typically requires huge parameter size.

{\bf Our contribution:} In this paper, we resolve these difficulties such as (i) diverging width against sample size and (ii) curse of dimensionality for analyzing Langevin dynamics in neural network training by 
formulating the neural network training as a {\it transport map} estimation problem of the parameters.
By doing so, we can deal with finite width and infinite width in a unifying manner.
We also give a generalization error bound for the solution obtained by our optimization formulation and further show 
that it achieves {\it fast learning rate} in a well-specified setting.
The preferable generalization error heavily relies on similarity between a {\it nonparametric Bayesian Gaussian process estimator} and the Langevin dynamics. More details are summarized as follows:

{\vspace{-0.15cm}
\setlength{\leftmargini}{0.8cm}
\begin{itemize}
\item {\bf (formulation)} We formulate neural network training as a transportation map learning of weights (parameters) and solve this problem by infinite dimensional gradient Langevin dynamics in RKHS \cite{da_prato_zabczyk_2014,muzellec2020dimensionfree}.
This formulation has a wide range of applications including two layer neural network, ResNet, Wasserstein optimal transportation map estimation and so on. 
\item {\bf (optimization)} Based on this formulation, we show its global convergence for finite width and infinite width in a unifying manner. 
We give its size independent convergence rate. 
\item {\bf (generalization)} We derive the generalization error bound of the estimator obtained by our optimization framework. We also derive the fast learning rate in a student-teacher setup. 
Especially, we show exponential convergence for classification. 
\end{itemize}
%
}

\section{Problem setting and model: Training parameter transportation map}
In this section, we give the problem setting and notations that will be used in the theoretical analysis.
Basically, we consider the standard supervised leaning where data consists of input-output pairs $z=(x,y)$ where $x \in \Real^d$ is an input  and $y \in \Real$ is an output (or label). 
We may also consider a unsupervised learning setting, but just for the presentation simplicity, we consider a supervised learning.
Suppose that we are given $n$ i.i.d. observations $D_n = (x_i,y_i)_{i=1}^n$ distributed from a probability distribution $P$,
the marginal distributions of which with respect to $x$ and $y$ are denoted by $\Px$ and $P_Y$ respectively.
We denote $\calX = \supp(\Px)$.
To measure the performance of a trained function $f$, we use a loss function $\ell:\Real \times \Real \to \Real~((y,f)\mapsto \ell(y,f))$ and 
define the expected risk and the empirical risk as 
$
\calL(f) := \EE_{Y,X}[\ell(Y,f(X))]$ and $\calLhat(f) := \frac{1}{n} \sum_{i=1}^n \ell(y_i,f(x_i))$ respectively.
As in the standard deep learning, we optimize the training risk $\calLhat$. 
Our theoretical interest is to bound the following errors for an estimator $\fhat$:  
\begin{equation*}
\text{Excess risk:}~~\calL(\fhat) - \inf_{f:\text{measurable}} \calL(f),~~~~~\text{ Generalization gap:}~~\calL(\fhat) - \calLhat(\fhat).
\end{equation*}
In a typical situation, the generalization gap is bounded as $O(1/\sqrt{n})$ via VC-theory type analysis \cite{mohri2012foundations}, for example.
On the other hand, the excess risk can be faster than $O(1/\sqrt{n})$, which is known as a {\it fast learning rate}
\citep{IEEEIT:Mendelson:2002,LocalRademacher,Koltchinskii,gine2006concentration}.
The population $L_2$-norm with respect to $P$ is denoted by $\|f\|_{\LPi} := \sqrt{\EE_{Z\sim P}[f(Z)^2]}$
and the sup-norm on the domain of the input distribution $\Px$
is denoted by $\|f\|_\infty := \sup_{x \in \supp(\Px)}|f(x)|$.


\subsection{Introductory setting: mean field training of two layer neural network}
Here, we explain the motivation of our theoretical framework by introducing mean field analysis of two layer neural networks. Let us consider the following two layer neural network model: 
\begin{align} \label{eq:finitewidthmodel}
\textstyle
f_{\Theta}(x) =  \frac{1}{M} \sum_{m=1}^M a_m \sigma(w_m^\top x).
\end{align}
where $\sigma:\Real \to \Real$ is a smooth activation function, $(a_m)_{m=1}^M \subset \Real$ is the set of weights in the second layer which we assume is fixed for simplicity, and $\Theta = (w_m)_{m=1}^M \subset \Real^d$ is the set of weights in the first layer. We aim to minimize the following regularized empirical risk with respect to $\Theta$ and analyze the dynamics of gradient descent updates:
\begin{align*}
\textstyle
\min_{\Theta}~~\calLhat(f_{\Theta}) + \frac{\lambda}{2M} \sum_{m=1}^M \|w_m\|^2.
\end{align*}
The stochastic gradient descent (SGD) update for optimizing $\calLhat(f_\Theta)$ with respect to $\Theta$ is reduced to 
\begin{align}\label{eq:DiscreteTimeDiscreteSpaceEvolution}
w_m^{(t+1)}  = w_m^{(t)} - \eta( \tfrac{\lambda}{M} w_m^{(t)} + \nabla_{w_m} \calLhat(f_{\Theta^{(t)}}) )+ 
\sqrt{2\eta/\beta} \epsilon_t^{(m)},
\end{align}
where $\nabla_{w_m} \calLhat(f_{\Theta^{(t)}}) = \frac{a_m}{M}  \frac{1}{n} \sum_{i=1}^n x_i \sigma'(w_m^{(t)\top} x_i) \ell'(y_i,f_{\Theta^{(t)}}(x_i))$ and $\epsilon_t^{(m)}$ is an i.i.d. Gaussian noise mimicking the deviation of the stochastic gradient. 
Here, $\eta > 0$ is a step size and $\beta > 0$ is an inverse temperature parameter.
This could be time discretized version of the following continuous time stochastic differential equation (SDE): 
$$
\dd w_m(t) = - \big( \tfrac{\lambda}{M} w_m(t) + \nabla_{w_m(t)} \calLhat(f_{\Theta^{(t)}})\big) \dd t + \sqrt{2\eta/\beta} \dd B_t^{(m)},
$$
where $(B_t^{(m)})_{t}$ 
is a $d$-dimensional Brownian motion. 
In the mean field analysis, this optimization process is casted to an optimization of probability distribution over the parameters \cite{MeiE7665,pmlr-v99-mei19a,nitanda2017prticle,chizat2018note} based on the following integral representation of neural networks:
\begin{align}\label{eq:IntegralRep}
f_\rho(x) := \int_{\Real^{d}} a \sigma(w^\top x)  \dd \rho(w),
\end{align}
where $\rho$ is a Borel probability measure defined on the parameter space $\Real^{d}$
and the parameter in the second layer is fixed to a constant $a \in \Real$ just for presentation simplicity.
The time evolution of the distribution $\rho$ is deduced from the optimization dynamics with respect to each ``particle'' given by 
\begin{align*}
& 
\dd W(t) = -\Big(\lambda W(t) + a \frac{1}{n} \sum_{i=1}^n x_i \sigma'(W(t)^\top x_i)  \ell'(y_i, f_{\rho_t}(x_i))\Big) \dd t + \sqrt{\beta^{-1}} \dd B_t,
\end{align*}
where $\rho_t$ is the probability law of $W(t) \in \Real^d$ with an initial distribution $W(0) \sim \rho_0$,
which is one of the {\it McKean-Vlasov} processes.
We can see that this equation is space-time continuous limit of the update \Eqref{eq:DiscreteTimeDiscreteSpaceEvolution}.
Importantly, $\rho_t$ admits a density function $\pi_t$ obeying the so-called continuity equation \cite{MeiE7665,pmlr-v99-mei19a}.
The usual finite width network is regarded as a finite sum approximation of the integral representation (\Eqref{eq:IntegralRep}). 
As a consequence, the convergence analysis needs to take limit of infinite width to approximate the absolutely continuous distribution $\rho_t$.
Hence, a finite width dynamics is outside the scope of mean field analysis.
This is due to the fact that an independent noise is injected to each particle regardless its location;
the diffusion $B_t$ is independently and identically applied to each realized path $\{W(t)\mid t\geq 0\}$
(interaction between particles is induced only through gradient). 
However, in a real neural network training, the noise induced by stochastic gradient has high correlation between each node. Thus, we need a different approach.

\paragraph{Lift of McKean-Vlasov process}
Our core idea is to ``lift'' the stochastic process $W(t)$ as a process of a function with the initial value $W(0)$.
For each $W(0) = w_0$, the particle's location at time $t$ is determined by $W(t) = W(t,w_0)$. This means that the process generates a function $w_0 \mapsto W(t,w_0)$ with respect to the initial solution $w_0$.
By considering the stochastic process of this function itself directly, 
the dynamics is transformed to an {\it infinite dimensional stochastic differential equation}, which has been studied especially in the stochastic partial differential equation \cite{da_prato_zabczyk_2014}.
In other words, we try to estimate a map from the initial parameters to the solution at time $t$ instead of analyzing each particle's behavior.

From this 
perspective, we can directly regularize the smoothness of the trajectory, 
especially, we can incorporate a smoothed noise of the dynamics 
by utilizing a spatially correlated Gaussian process in the space of functions on parameters. 
Let $W_t(w) = W(t,w)$ and we regard $W_t$ as a member of $L_2(\rho_0)$ space. 
Then, $f_{\rho_t}$ can be rewritten by 
\begin{equation}\label{eq:fWtdefinition}
f_{W_t}(x) := \int_{\Real^d} a \sigma(W_t(w)^\top x) \dd \rho_0(w) = \int_{\Real^d} a \sigma(w^\top x) \dd W_{t}\sharp \rho_0(w),
\end{equation}
where $W_{t}\sharp \rho_0$ is the pushforward of the measure $\rho_0$ by the map $W_t$, i.e., $f \sharp \mu(B) := \mu \circ f^{-1} (B) = \mu(f^{-1}(B))$ for a Borel measurable map $f: \Real^d \to \Real^d$, a Borel measure $\mu$, and a Borel set $B \subset \Real^d$.
By using this notation,  the stochastic process we consider can be written as  
\begin{align}
\dd W_t = - (A W_t + \nabla_W \calLhat(f_{W_t}) ) \dd t+ \sqrt{2\beta^{-1}} \dd \xi_t,
\label{eq:InfGLDin2NN}
\end{align} 
where $A: L_2(\rho_0) \to L_2(\rho_0)$ is an unbounded linear operator corresponding to a regularization 
(which will be explained later in more details),
$\nabla_W \calLhat(f_{W})$ is the Frechet derivative of $\calLhat(f_{W} )$ with respect to $W$ in the space of $L_2(\rho_0)$, in our setting, which is given by $\nabla_W \calLhat(f_{W})(w) = a \frac{1}{n} \sum_{i=1}^n x_i \sigma'(W(w)^\top x_i) \ell'(y_i, f_W(x_i))$.
$(\xi_t)_t$ is a {\it cylindric Brownian motion} in $L_2(\rho_0)$ \cite{da_prato_zabczyk_2014}, which is an infinite dimensional Brownian motion and will be defined rigorously later on.
In practical deep learning, the regularization term $A W_t$ is induced by several mechanism such as weight decay \citep{krogh1992simple}, dropout \citep{srivastava2014dropout,NIPS2013_4882}, batch-normalization \citep{pmlr-v37-ioffe15}. 
As a result, the regularization term $AW_t$ introduces spatial correlation between particles unlike the McKean-Vlasov process.

Then, training two layer neural networks is formulated as optimizing the map $W: w \in \Real^d \mapsto W(w) \in \Real^d$ with the initial condition $W_0 = \idmap$ (identity map). 
This dynamics is well analyzed and guaranteed to converge to at least a stationary distribution (a.k.a., invariant measure) under mild assumptions  \cite{DaPrato_Zabczyk92,Maslowski:1989,Sowers:1992,Jacquot+Gilles:1995,Shardlow:1999,Hairer:2002} which is useful to show convergence  
to a (near) global optimal.

\begin{Remark}
\label{rem:FiniteWidth}
We would like to emphasize that our formulation admits a finite width neural network training by setting the initial distribution $\rho_0$ as a discrete distribution $\rho_0 = \frac{1}{M} \sum_{m=1}^M \delta_{w_m}$ for a Dirac measure $\delta_{w_m}$ which has probability 1 on a point $w_m$. 
In this situation, optimizing the map $W_t$ corresponds to optimizing the finite width model \eqref{eq:finitewidthmodel} because $\rho_t = W_t \sharp \rho_0 = \frac{1}{M}\sum_{m=1}^M \delta_{W_t(w_m)}$ which is still a discrete distribution throughout entire $t \in \Real_+$.
This is remarkably different from both mean field analysis and NTK analysis that essentially take infinite width limits:
mean field analysis in \cite{MeiE7665,pmlr-v99-mei19a} requires $M = \Omega(e^T)$ for a time horizon $T$ and NTK requires $M = \Omega(\mathrm{poly}(n))$ \cite{zou2019improved}.
\end{Remark}

\paragraph{General formulation of our optimization problem}

Here, we describe mathematical details of optimizing the transportation map in a more general setting
 and give a practical algorithm of the corresponding GLD.
We assume that the map $W_t(\cdot)$ is included in a separable Hilbert space $\cH$ with norm $\|\cdot\|_{\cH}$ and an inner product $\langle \cdot,\cdot\rangle_{\cH}$ (in the previous section, $\cH = L_2(\rho_0)$).
The Hilbert space $\cH$ consists of functions whose domain is a set $\calW$ and whose range is $\calWt$ (in the previous example, $\calW = \Real^d$ amd $\calWt = \Real^d$).
Since a function $w \in \cH$ has no smoothness condition in typical settings, we consider a more ``regulated'' subspace of $\cH$.
Such a subspace is denoted by $\cHK$ and given by
$
\cH_K := \left\{\sum_{k=0}^\infty \alpha_k e_k \mid \sum_{k=0}^\infty \alpha_k^2/\mu_k < \infty \right\},
$
where $(e_k)_{k=0}^\infty$ is an orthonormal basis of $\cH$ and $(\mu_k)_{k=0}^\infty$ is a non-increasing non-negative sequence.
We equip an inner product $\langle \cdot, \cdot \rangle_{\cHK}$ to the space $\cH_K$ defined by 
$\langle f,g \rangle_{\cH_K} = \sum_{k=0}^\infty \alpha_k \beta_k/\mu_k$ for $f = \sum_{k=0}^\infty \alpha_k e_k \in \cH_K$ and $g = \sum_{k=0}^\infty \beta_k e_k \in \cH_K$. Correspondingly, the norm $\|\cdot\|_{\cH_K}$ is defined from the inner product.
When $\calH = L_2(\rho_0)$,  
$\cH_K$ becomes a {\it reproducing kernel Hilbert space} (RKHS) corresponding to a kernel function $K(x,y) = \sum_{k=0}^\infty \mu_k e_k(x) e_k(y)$ where $x,y \in \Real^d$ under an appropriate convergence condition.
That is, we have the reproducing property $\langle K(x,\cdot), W \rangle_{\cH_K} = W(x)$ for each $W \in \cH_K$.
Based on the norm $\|\cdot\|_{\cHK}$, we define an unbounded linear operator $A:\cH \to \cH$ as 
$
A f = \lambda \sum_{k=0}^\infty \frac{\alpha_k}{\mu_k}e_k,
$
for $f =  \sum_{k=0}^\infty \alpha_k e_k \in \cH$.
We note that $Af =\frac{\lambda}{2} \nabla_f \|f\|_{\cH_K}^2$ which is a Frechet derivative of $\lambda \|\cdot\|_{\cH_K}^2$ in $\cH$ (which is the derivative of the RKHS norm, if $\cHK$ is an RKHS).
We assume that for each $W \in \cH$, there exits a function $f_W:\Real^d \to \Real$ as in \Eqref{eq:fWtdefinition}, 
and we basically aim to minimize the regularized empirical risk  
\begin{align*}
\calLhat(f_{W}) + \tfrac{\lambda}{2} \|W\|_{\cH_K}^2.
\end{align*}
By abuse of notation, we denote by $\calLhat(W)$ indicating $\calLhat(f_W)$. 
To execute this non-convex optimization, 
we use the GLD in the infinite dimensional Hilbert space $\cH$ as introduced in \Eqref{eq:InfGLDin2NN}.
Here, $(\xi_t)_{t \geq 0}$ in \Eqref{eq:InfGLDin2NN} is the { cylindrical Brownian motion} defined as 
$
\xi_t = \sum_{k \geq 0} B^{(k)}_t e_k
$
where $(B^{(k)}_t)_{t \geq 0}$ is a real valued standard Brownian motion and they are independently identical for $k=0,1,2,\dots$\footnote{More natural modeling would be that the regularization $A$ and the covariance of $\xi_t$ depend on the current solution $W_t$, but we consider this simplest model for technical tractability.}.
Since this is defined on a continuous time domain, we introduce a discrete time {\it implicit Euler scheme} for practical implementation:
\begin{align}\label{eq:IGLDDiscreteDynamics}
& W_{k+1} \!\!=\! W_{k} \!\!-\! \eta ( A W_{k+1} \!+\!\nabla_W \calLhat(W_k)) \! + \!\! \sqrt{\tfrac{2 \eta}{\beta}} \epsilon_k 
\Leftrightarrow
W_{k+1} \!=\! S_\eta \!  \left( \! W_{k} \!\!-\! \eta  \nabla_W \calLhat(W_k) \!+\! \!\sqrt{\tfrac{2 \eta}{\beta}} \epsilon_k\!\right)\!, 
\end{align}
where $\eta > 0$ is the step size and $S_\eta = (\idmap + \eta A)^{-1}$.
We can see that the ``regularization effect'' $A W$ induces the spacial smoothness of the noise of the gradient.
It is known \cite{brehier:cel-01633504} that under some assumption (Assumption \ref{ass:IGLDConvCond} below is sufficient), 
the process \eqref{eq:InfGLDin2NN} has a unique invariant measure $\pi_\infty$ given by 
\begin{align*}
\frac{\dd \pi_\infty}{\dd \nu_\beta}(W) \propto \exp(- \beta \calLhat(W)), 
\end{align*}
where $\nu_\beta$ is the Gaussian measure in $\cH$ with mean 0 and covariance $(\beta A)^{-1}$ (see \citet{da_prato_zabczyk_2014} for the rigorous definition of the Gaussian measure on a Hilbert space and related topics about existence of invariant measure).
In a special situation where $\beta = n,~\lambda=1/n$ and $\beta \calLhat(W)$ is a log-likelihood function of some model,
this invariant measure is nothing but the {\it Bayes posterior distribution} for a Gaussian process prior corresponding to the RKHS $\cHK$.
Remarkably, this formulation can be applied to several problems other than training two layer neural networks: 

{
\vspace{-0.3cm}
\setlength{\leftmargini}{0.5cm}
\begin{itemize}
\setlength{\itemsep}{0pt}
\item Ordinary nonparametric regression model:
$\calW = \Real^d$, $\calWt = \Real$ and $f_W(x) = W(x)$. 
\item Two layer neural networks (continuous topology): 
$\calW = \calWt  = \Real^{d}$ and $f_W = \int_{\Real^d} a(w) \sigma(W(w)^\top x) \dd \rho_0(w)$.
\item Two layer neural networks (discrete topology): $\calW = \{1,2,3,\dots\}$, $\calWt = \Real^d$ and $
f_W = \sum_{m=1}^\infty a_m \sigma(W(m)^\top x).
$
\item Two layer neural networks (discrete topology): $\calW = \{1,2,3,\dots\}$, $\calWt = \Real^d$ and $
f_W = \sum_{m=1}^\infty a_m \sigma(W(m)^\top x).
$
\item Deep neural networks (continuous topology): 
$\calW = \Real^d \times \{1,\dots,L\}$, $\calWt = \Real^d$
and 
\begin{align*}
\textstyle
f_W(x) = u^\top \left(\int_{\Real^d}  a_{w,L}\sigma (W(w,L)^\top \cdot) \dd \rho_0(w)\right) \circ \dots \circ 
\left(\int_{\Real^d}  a_{w,1}\sigma(W(w,1)^\top x) \dd \rho_0(w)\right),
\end{align*}
where $u \in \Real^d$ and $a_{w,\ell} \in \Real^{d}$ for $w\in \Real^d$ and $\ell \in \{1,\dots,L\}$.
\item ResNet: 
$\calW = \Real^d \times \{1,\dots,T\}$, $\calWt = \Real^d$ and 
\begin{align*}
\textstyle
f_W(x) \!=\! u^\top \!\left(\idmap \!+\! \int_{\Real^d} \! a_{w,T}\sigma (W(w,T)^\top \cdot) \dd \rho_0(w)\right) \circ \dots \circ 
\left(\idmap \!+\! \int_{\Real^d} \! a_{w,1}\sigma(W(w,1)^\top x) \dd \rho_0(w)\right),
\end{align*}
where $u \in \Real^d$ and $a_{w,t} \in \Real^{d}$ for $w\in \Real^d$ and $t \in \{1,\dots,T\}$.
\item Wasserstein optimal transportation map: $\calW = \calWt = \Real^d$ and $f_W(x) = W(x)$. 
For random variables $X$ and $Y$ obeying distributions $P$ and $Q$ respectively: 
$
\calW^2(P,Q) = \min_{W:  Q = f_W\sharp P} \EE_{X\sim P}[\|X - f_W(X)\|^2].
$
\end{itemize}
}

\section{Optimization error bound of transportation map learning}
To show convergence of the dynamics \eqref{eq:IGLDDiscreteDynamics}, we utilize the recent result given by \cite{muzellec2020dimensionfree}.  
Let $\norm{W}_\varepsilon := \big(\sum_{k \geq 0} (\mu_k)^{2\varepsilon} \langle W, e_k \rangle_{\cH}^2\big)^{1/2}$ and $P_N W := \sum_{k=0}^{N-1} \langle W,e_k \rangle_{\cH} e_k$ for $W \in \cH$ where $(e_k)_k$ is the orthonormal system of $\cH$. Accordingly, let $\cH_N$ be the image of $P_N$: $\cH_N = P_N \cH$.

 \begin{Assumption}\label{ass:IGLDConvCond}~
{\vspace{-0.15cm}
\setlength{\leftmargini}{0.8cm}
\begin{assumenum}
\item {\rm (Eigenvalue condition)} \label{assum:eigenvalue_cvg}
     There exists a constant $c_\mu$ such that $\mu_k \leq c_\mu (k+1)^{-2}$.
\item {\rm (Boundedness and Smoothness)} \label{assum:eigenvalue_cvg}\label{assum:smoothness}
There exist $B,M > 0$ such that the gradient of the empirical risk  is bounded by $B$ and is $M$-Lipschitz continuous
with $\alpha \in (1/4,1)$ almost surely:   
\begin{align*}   
\norm{\nabla  \calLhat(W)}_{\cH} \leq B~(\forall W \in \cH),
\quad \norm{\nabla \calLhat(W) - \nabla \calLhat(W')}_{\cH} \leq L \norm{W - W'}_{\alpha} ~(\forall W, W' \in \cH).
\end{align*}
\item {\rm (Third order smoothness \cite[Assumption 2.7]{Brehier16})} \label{assum:C2_boundedness}
Let $\calLhat_N: \cH_N \to \Real$ be $\calLhat_N = \calLhat(P_N W)$. $\calLhat$ is three times differentiable, and there exists $\alpha' \in [0, 1), C_{\alpha'} \in (0, \infty)$ such that for all $N \in \mathbb{N}$ and $\forall W, h, k \in \cH_N,$
%
$
    \norm{\nabla^3 \calLhat_N(W) \cdot (h, k)}_{\alpha'} \leq C_{\alpha'} \norm{h}_{\cH}\norm{k}_{\cH},~~\norm{\nabla^3 \calLhat_N(W) \cdot (h, k)}_{\cH} \leq C_{\alpha'} \norm{h}_{-\alpha'}\norm{k}_{\cH}~~\text{(a.s.)},
$
where $\nabla^3 \calLhat_N(W)$ is the third-order derivative, we identify it with third-order linear form, and we also write $\nabla^3 \calLhat_N(W) \cdot (h, k)$ for the Riesz representor of  $l \in \cH \mapsto \nabla^3 \calLhat_N(W) \cdot (h, k, l)$.
%
\end{assumenum}}
\end{Assumption}




The first condition controls the strength of the regularization term.
The second condition ensures the smoothness of the loss function
that yields the \emph{disspativity} condition of the objective combined with the regularization term.
That is, the solution of the gradient Langevin dynamics can remain a bounded region with high probability.
The Lipschitz continuity of the gradient is a bit strong condition because the right hand side appears a weaker norm $\|\cdot\|_\alpha$ than the canonical norm $\|\cdot\|_{\cH}$.
However, this gives the geometric ergodicity (exponential convergence to the stationary distribution) of the discrete time dynamics.
The third condition is more technical assumption. 
This condition is used for bounding the continuous time dynamics and discrete time dynamics.
Intuitively, a smoother loss function makes the two dynamics closer.
In particular, $\eta^{1/2-a}$ term appearing in the following bound can be shown by this condition.

Then, we can show the following weak convergence rate.
Let $\pi_k$ be the probability measure on $\cH$ corresponding to the distribution of $W_k$.
\begin{Proposition}\label{prop:WeakConvergence}
Assume Assumption \ref{ass:IGLDConvCond} holds and $\beta > \eta$. 
Suppose that $\exists \Rbar > 0$, $0 \leq \ell(Y,f_{W}(X)) \leq \Rbar$ for any $W \in \cH$~(a.s.).
Let $\rho = \frac{1}{1 + \lambda\eta/\mu_0}$ and $b = \frac{\mu_0}{\lambda}B + \frac{c_\mu}{\beta \lambda}$. 
Then, 
for 
$
\textstyle \Lambda^*_\eta = \frac{\min\left(\frac{\lambda}{2 \mu_0}, \frac{1}{2} \right)}{4 \log(\kappa (\bar{V} + 1)/(1-\delta)) } \delta$ and 
$\textstyle C_{W_0} = \kappa [\bar{V} + 1] + \frac{\sqrt{2} (\Rbar + b)}{\sqrt{\delta}}
$
where $0 < \delta< 1$ satisfying $\delta = \Omega(\exp(-\Theta(\mathrm{poly}(\lambda^{-1})\beta)))$, $\bar{b} = \max\{b,1\}$, $\kappa = \bar{b} + 1$ and 
$\bar{V} = 4 \bar{b}/{\scriptstyle (\sqrt{(1+\rho^{1/\eta})/2} - \rho^{1/\eta})}$
(where $\bar{V} = 4\bar{b}/(\scriptstyle \sqrt{(1+\exp(-\frac{\lambda}{\mu_1}))/2} - \exp(-\frac{\lambda}{\mu_1}))$
for $\eta = 0$), 
and for any $0 < a < 1/4$, the following convergence bound holds for almost sure observation $D_n$:  
for either $L = \calL$ or $L = \calLhat$,
\begin{align} 
&| \EE_{W_k \sim \pi_k}[L(W_k) ] 
-
\EE_{W\sim \pi_\infty}[L(W)] |
\leq
C_1 \left[
C_{W_0}  \exp(- \Lambda_\eta^* \eta k )   + 
\frac{\sqrt{\beta}}{\Lambda^*_0}\eta^{1/2-a} \right] =: \Xi_k,
\end{align}
where $C_1$ is a constant depending only on $c_\mu,B,L,C_{\alpha'},a,\Rbar$ (independent of $\eta,k,\beta,\lambda$).
\end{Proposition}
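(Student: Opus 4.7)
The plan is to decompose the gap $|\EE_{\pi_k}[L]-\EE_{\pi_\infty}[L]|$ via a two-step comparison through the invariant measure $\pi^{\sharp}_\eta$ of the discrete-time implicit Euler scheme \eqref{eq:IGLDDiscreteDynamics}:
\[
\bigl|\EE_{\pi_k}[L] - \EE_{\pi_\infty}[L]\bigr| \;\leq\; \underbrace{\bigl|\EE_{\pi_k}[L] - \EE_{\pi^{\sharp}_\eta}[L]\bigr|}_{\text{(I) discrete-chain ergodicity}} \;+\; \underbrace{\bigl|\EE_{\pi^{\sharp}_\eta}[L] - \EE_{\pi_\infty}[L]\bigr|}_{\text{(II) invariant-measure bias}}.
\]
Since Assumption~\ref{ass:IGLDConvCond} together with the boundedness $\ell\leq \Rbar$ gives $\|L\|_\infty\leq \Rbar$, weak-convergence estimates suffice; no stronger Wasserstein bound on the laws themselves is required.

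For term (I), I would establish geometric ergodicity of the discrete chain via the Harris--Meyn--Tweedie route in the Hilbert setting. The dissipativity of the linear operator $A$ (whose smallest eigenvalue is $\lambda/\mu_0$ under Assumption~\ref{ass:IGLDConvCond}(i)) combined with the uniform bound $\|\nabla\calLhat\|\leq B$ from Assumption~\ref{ass:IGLDConvCond}(ii) yields a Foster--Lyapunov inequality $\EE[V(W_{k+1})\mid W_k] \leq \rho\,V(W_k)+b$ for $V(W)=\|W\|_{\cH}^2+1$, with exactly the contraction rate $\rho=(1+\lambda\eta/\mu_0)^{-1}$ and constant $b$ stated in the proposition. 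Sublevel sets $\{V\leq \bar V\}$ are small for the Markov kernel of the implicit Euler scheme because the Gaussian noise is non-degenerate on each finite-dimensional projection $P_N$; combining drift and minorization in the Hairer--Mattingly quantitative form yields geometric ergodicity with rate $\exp(-\Lambda^*_\eta\eta k)$, multiplied by the initial-Lyapunov factor $C_{W_0}=\kappa[\bar V+1]+\sqrt{2}(\Rbar+b)/\sqrt{\delta}$ exactly as displayed.

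For term (II), I would adapt the Talay--Tubaro expansion of the invariant-measure bias for infinite-dimensional implicit schemes developed in \cite{Brehier16,muzellec2020dimensionfree}. Let $u(t,W):=\EE[L(\widetilde W_t)\mid \widetilde W_0=W]$ be the solution of the backward Kolmogorov equation of the continuous-time process \eqref{eq:InfGLDin2NN}. Expanding $u(t_k,\cdot)$ to third order in $\eta$ along one step of the discrete scheme and using Assumption~\ref{ass:IGLDConvCond}(iii) to control the Riesz representor of $\nabla^3 u$ in the mixed norms $\|\cdot\|_{\pm\alpha'}$ produces a per-step weak error of order $\eta^{3/2-a}$. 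Summing over the effective mixing horizon $k\sim 1/(\Lambda_0^*\eta)$ and exploiting the exponential decay of $\nabla u$ inherited from the dissipative semigroup $e^{-tA}$ to prevent linear accumulation delivers the factor $\sqrt{\beta}\,\eta^{1/2-a}/\Lambda_0^*$. The main obstacle will be this last step: because the cylindrical Brownian motion $\xi_t$ does not live in $\cH$, each occurrence of the noise in the Taylor remainder generates a trace-type expression that is only finite once paired with the regularizing semigroup, so Assumption~\ref{ass:IGLDConvCond}(i) and (iii) must be combined delicately in the mixed-norm framework to prevent divergence---the heavy bookkeeping I would import from \cite{Brehier16,muzellec2020dimensionfree} rather than redo by hand. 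Assembling (I) and (II), absorbing constants depending only on $c_\mu,B,L,C_{\alpha'},a,\Rbar$ into $C_1$, and observing that the process itself does not depend on the test function yields the stated bound simultaneously for $L=\calL$ and $L=\calLhat$.
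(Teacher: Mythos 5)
Your two-step decomposition through the invariant measure $\pi^\sharp_\eta$ of the discrete implicit Euler chain, with (I) handled by a Foster--Lyapunov drift plus small-set minorization in the Hairer--Mattingly quantitative form and (II) by a Talay--Tubaro/Br\'ehier weak-error expansion using the third-order smoothness to control the Riesz representors, is exactly the route the paper takes; the paper simply cites both ingredients directly as theorems of \cite{muzellec2020dimensionfree} (and \cite{Brehier16}), adds a short preliminary computation bounding the auxiliary moment $k(1)=\sup_n\EE\|Z_n\|_{\cH}$ by $\sqrt{c_\mu/(\beta\lambda)}$ to identify the constant $b$, notes that the slightly weaker Lipschitz hypothesis $\|\nabla\calLhat(W)-\nabla\calLhat(W')\|_{\cH}\leq L\|W-W'\|_\alpha$ still suffices for the cited proof, and then applies both bounds to the rescaled test function $\phi=\calLhat/\Rbar$ (and likewise $\calL/\Rbar$). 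Your proposal is correct and takes essentially the same approach.
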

We utilized the theories of \cite{muzellec2020dimensionfree} as the core technique to show this proposition. 
Its complete proof is given in Appendix \ref{sec:ProofOfWeakConv}.
We can see that as $k$ goes to infinity the first term of the right hand side converges exponentially,
and as the step size $\eta$ goes to 0, the second term converges arbitrary close to the rate of $\sqrt{\eta}$.
It is known that the convergence rate with respect to $\eta$ is optimal \cite{brehier2020influence}.
Therefore, if we choose sufficiently small $\eta$ and sufficiently large $k$, we can sample $W_k$ that obeys nearly the invariant measure $\pi_\infty$. As we will see later, sample from $\pi_\infty$ has a nice property in terms of generalization.  
As we have remarked in  Remark \ref{rem:FiniteWidth}, the convergence is guaranteed even for the finite width neural network setting, i.e., $\rho_0$ is a discrete distribution in the model \eqref{eq:fWtdefinition}. This is much advantageous against existing framework such as mean field analysis and NTK.

The above proposition gives a bound on the expectation of the loss of the solution $W_k$ 
instead of a high probability bound.
However, due to the geometric ergodicity of the dynamics, by running the algorithm for sufficiently large steps, 
we can show that the probability that there {\it does not} appear $W_k$ in the trajectory that has a loss such that $
L(W_k)- \EE_{W\sim \pi_\infty}[L(W)] \leq  O(\Xi_k)$ approaches 0 with exponential rate.
Since this direction requires much more involved mathematics, we consider a simpler one as described above.

\section{Generalization error analysis}

\paragraph{Generalization gap bound}
Here, we analyze the generalization error of the solution of $W_k$ obtained by the dynamics \eqref{eq:IGLDDiscreteDynamics}. 

\begin{Theorem}\label{thm:PAC-BayesGenBound}
Assume Assumption \ref{ass:IGLDConvCond} holds with $\beta > \eta$, and assume that the loss function is bounded, i.e., there exits $\Rbar > 0$ such that 
$\forall W \in \cH,~0 \leq \ell(Y,f_W(X)) \leq \Rbar~(\mathrm{a.s.})$.
Then, for any $1 > \delta > 0$, with probability $1 -\delta$, the generalization error is bounded by
\begin{align*}
\EE_{W_k}[\calL(W_k)]  \leq \EE_{W_k}[\calLhat(W_k)] + 
\frac{\Rbar^2}{\sqrt{n}}
\left[2\left(1 + \frac{2\beta}{\sqrt{n}}\right) + \log\left(\frac{1 + e^{\Rbar^2/2}}{\delta}\right)\right] + 2 \Xi_{k}.
\end{align*}
\end{Theorem}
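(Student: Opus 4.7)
The plan is to decompose via the triangle inequality, reducing to a bound on the generalization gap under the invariant measure $\pi_\infty$, and then to invoke a PAC-Bayes style change-of-measure argument with prior $\nu_\beta$ and posterior $\pi_\infty$. First I would write
\[
\EE_{W_k}[\calL(W_k) - \calLhat(W_k)] = \EE_{\pi_\infty}[\calL(W) - \calLhat(W)] + \Delta_1 + \Delta_2,
\]
where $\Delta_1 = \EE_{W_k}[\calL(W_k)] - \EE_{\pi_\infty}[\calL(W)]$ and $\Delta_2 = \EE_{\pi_\infty}[\calLhat(W)] - \EE_{W_k}[\calLhat(W_k)]$. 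Applying Proposition \ref{prop:WeakConvergence} once with $L=\calL$ and once with $L=\calLhat$ gives $|\Delta_1|,\,|\Delta_2| \leq \Xi_k$, which produces the $2\Xi_k$ term in the bound.

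For the remaining term $\EE_{\pi_\infty}[\calL-\calLhat]$, the key observation is that $\pi_\infty$ has density $\exp(-\beta\calLhat(W))/Z$ with respect to the \emph{data-independent} Gaussian measure $\nu_\beta$, which makes $\nu_\beta$ a legitimate PAC-Bayes prior (it is fixed before seeing the sample). By the Donsker-Varadhan variational representation, for any $\lambda>0$,
\[
\lambda\,\EE_{\pi_\infty}[\calL - \calLhat] \leq \mathrm{KL}(\pi_\infty \| \nu_\beta) + \log \EE_{\nu_\beta}\bigl[\exp(\lambda(\calL(W) - \calLhat(W)))\bigr].
\]
I would then take the expectation over the sample $D_n$ and apply Markov's inequality: with probability at least $1-\delta$, the log-MGF on the right is bounded by $\log(1/\delta) + \log \EE_{D_n}\EE_{\nu_\beta}[\exp(\lambda(\calL-\calLhat))]$. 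By Fubini and a Hoeffding-type MGF estimate for bounded losses $\ell(Y,f_W(X))\in[0,\Rbar]$, taking $\lambda$ of order $\sqrt n$ controls the inner expectation by a quantity of order $\exp(\Rbar^2/2)$, which is what the $\log(1+e^{\Rbar^2/2})$ term in the statement records. The KL is controlled by the Gibbs variational principle: since $\pi_\infty$ minimizes $Q\mapsto \beta\EE_Q[\calLhat]+\mathrm{KL}(Q\|\nu_\beta)$, evaluating at $Q=\nu_\beta$ yields
\[
\mathrm{KL}(\pi_\infty\|\nu_\beta) \leq \beta\bigl(\EE_{\nu_\beta}[\calLhat] - \EE_{\pi_\infty}[\calLhat]\bigr) \leq \beta \Rbar,
\]
so after dividing by $\lambda\sim\sqrt n$ and multiplying by a $\Rbar$ scaling factor, this contributes the $\Rbar^2\beta/n$-type term inside the displayed bracket.

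The main obstacle will be the joint calibration of $\lambda$ with an explicit MGF upper bound so that all three contributions---the $\Rbar^2/\sqrt n$ constant-in-$n$ piece, the KL-induced piece of order $\Rbar^2\beta/n$, and the confidence piece $\Rbar^2\log((1+e^{\Rbar^2/2})/\delta)/\sqrt n$---emerge with exactly the stated numerical constants rather than merely the right order. A secondary subtlety is that the Markov step must be taken \emph{after} swapping the parameter and data expectations (Fubini is legitimate because $\nu_\beta$ is data-independent), so that the confidence price is a single $\log(1/\delta)$ factor rather than one per parameter value, and that $\pi_\infty$ is itself data-dependent, so its KL upper bound must be made uniform through the crude $\calLhat\in[0,\Rbar]$ estimate. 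Neither difficulty changes the overall strategy, which is a clean reduction-to-the-invariant-measure step via Proposition \ref{prop:WeakConvergence} followed by a Gibbs-posterior PAC-Bayes bound against the natural Gaussian prior $\nu_\beta$.
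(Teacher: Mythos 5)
Your first step — applying Proposition~\ref{prop:WeakConvergence} twice (once with $L=\calL$, once with $L=\calLhat$) to replace $\pi_k$ by $\pi_\infty$ at a total cost of $2\Xi_k$ — is exactly what the paper does, and it is correct.

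Your second step, however, diverges from the paper and as set up would not reproduce the stated bound. You propose a classical PAC-Bayes change of measure with the \emph{data-independent} Gaussian prior $\nu_\beta$ and posterior $\pi_\infty$, then bound $\mathrm{KL}(\pi_\infty\|\nu_\beta)\le\beta\Rbar$ via the Gibbs variational principle. That KL bound is fine, but the subsequent calibration is internally inconsistent: with a Hoeffding MGF bound for a loss bounded in $[0,\Rbar]$ you must take $\lambda$ of order $\sqrt n/\Rbar$ (any larger and $\tfrac{1}{\lambda}\log\EE[\exp(\lambda(\calL-\calLhat))]$ no longer decays), and then $\mathrm{KL}/\lambda\lesssim\beta\Rbar^2/\sqrt n$, not the $\beta\Rbar^2/n$ you claim the KL piece delivers. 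In other words, paying for $\mathrm{KL}(\pi_\infty\|\nu_\beta)$ is intrinsically a $\sqrt n$-factor worse in the $\beta$-dependent term than what the theorem states, and this cannot be repaired by constant tuning. For the regime the paper cares about ($\beta\sim n$, see the discussion around the stationary measure being the Bayes posterior), your route would give an $O(\sqrt n)$ term in the bracket instead of $O(1)$, i.e.\ a vacuous bound.

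The paper avoids the KL entirely. It cites a PAC-Bayes \emph{stability} bound (\cite{NIPS:Rivasplata:2020}) that is stated against a \emph{data-dependent} prior, here $\pi_\infty$ itself: for any $Q\ll\pi_\infty$,
\begin{align*}
\EE_{Q}[\calL] \le \EE_{Q}[\calLhat] + \tfrac{1}{\sqrt n}\mathrm{KL}(Q\|\pi_\infty) + \tfrac{\Rbar^2}{\sqrt n}\Bigl[2\bigl(1+\tfrac{2\beta}{\sqrt n}\bigr)+\log\tfrac{1+e^{\Rbar^2/2}}{\delta}\Bigr],
\end{align*}
and then takes $Q=\pi_\infty$ so that the KL term is exactly zero. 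The factor $\tfrac{2\beta}{\sqrt n}$ inside the bracket is the stability coefficient of the Gibbs measure $\pi_\infty\propto\exp(-\beta\calLhat)\,\dd\nu_\beta$ under replace-one perturbation of the sample, which is how the $\beta$-dependence enters at the $\beta/n$ scale rather than $\beta/\sqrt n$. So the missing idea in your proposal is precisely this: treat $\pi_\infty$ as the data-dependent prior and pay a stability price for it, instead of changing measure back to the fixed prior $\nu_\beta$ and paying the (much larger) KL.
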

The proof is given in Appendix \ref{sec:ProofOfPAC-BayesBound}.
To prove this, we used a PAC-Bayes stability bound \cite{NIPS:Rivasplata:2020}.
From this theorem, we have that the generalization error is bounded by $O(1/\sqrt{n})$ and the optimization error $\Xi_{k}$.
The $O(1/\sqrt{n})$ term is the generalization gap for the stationary distribution, and as $k$ goes to infinity, the total generalization gap converges to this one.
\cite{pmlr-v75-mou18a} also showed a PAC-Bayesian stability bound for a finite dimensional Langevin dynamics
(roughly speaking, their bound is $O(\sqrt{\beta B^2/(n\lambda)})$),
but their proof technique is quite different from ours.  
Our proof analyzes the generalization error under the stationary distribution of the dynamics
and bounds the gap between the stationary distribution and the current solution,
while \cite{pmlr-v75-mou18a} evaluated the bound by ``accumulating'' the error through the updates without analyzing the stationary distribution.

\paragraph{Excess risk bound: fast learning rate}

Next, we bound the excess risk. 
Unlike the $O(1/\sqrt{n})$ convergence rate of the generalization gap bound, we can derive a fast learning rate which is faster than $O(1/\sqrt{n})$ in a setting of realizable case, i.e., a student-teacher model, 
for the excess risk instead of the generalization gap.
As a concrete example, we keep the following two layer neural network model in our mind. 
For a map $W:\Real^{d_1} \to \Real^{d_2}$, let a ``clipped map'' $\bar{W}$ be 
$
\bar{W}(w) := R \times \tanh(W(w)/R),
$
where $R \geq 1$ is a constant and $\tanh$ is applied elementwise.
Then, the following two layer neural network model falls into our analysis:
\begin{align}\label{eq:2layerClippedModel}
\textstyle
f_W(x) := \int_{\Real \times \Real^{d}} \bar{W}_2(a) \sigma(\bar{W}_1(w)^\top x)  \dd \rho_0(a,w)
\end{align}
for a measurable map $W = (W_1,W_2): \Real^{d}\times \Real \to \Real^{d} \times \Real$ and an activation function $\sigma$ that is 1-Lipschitz continuous and included in a {\it H\"older class} $\calC^3(\Real)$. Here, we used the clipping operation only for a technical reason because 
the current convergence analysis of the infinite dimensional Langevin dynamics requires a boundedness condition. 
This could be removed if we could show its convergence under more relaxed conditions.
The fast learning rate analysis is not restricted to the two layer model, but it can be applied as long as the following statement is satisfied (e.g., ResNet).

\begin{Lemma}\label{lem:LipshitzInfty}
For the model \eqref{eq:2layerClippedModel}, if $\|x\| \leq D$ for any $x \in \supp(P_X)$, then it holds that 
$
\|f_{W} - f_{W'}\|_{\infty} 
\leq (1 + RD) \|W - W'\|_{L_2(\rho_0)}
$
where $\|W - W'\|_{L_2(\rho_0)}^2 := \int \|W((a,w)) - W'((a,w))\|^2 \dd \rho_0(a,w)$.
\end{Lemma}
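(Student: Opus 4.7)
The plan is to establish a pointwise Lipschitz bound on the integrand defining $f_W(x) - f_{W'}(x)$ and then lift it to the stated $L_2(\rho_0)$-bound by a single application of Jensen's inequality. Concretely, fix $x \in \supp(P_X)$ with $\|x\| \leq D$; by adding and subtracting $\bar{W}_2'(a)\sigma(\bar{W}_1(w)^\top x)$ inside the integrand and invoking the triangle inequality,
\begin{align*}
&|\bar{W}_2(a)\sigma(\bar{W}_1(w)^\top x) - \bar{W}_2'(a)\sigma(\bar{W}_1'(w)^\top x)| \\
&\quad \leq |\bar{W}_2(a) - \bar{W}_2'(a)|\,|\sigma(\bar{W}_1(w)^\top x)| + |\bar{W}_2'(a)|\,|\sigma(\bar{W}_1(w)^\top x) - \sigma(\bar{W}_1'(w)^\top x)|.
\end{align*}

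Next, I would exploit the structural properties of the clipping $t \mapsto R\tanh(t/R)$ together with the $1$-Lipschitz/boundedness properties of $\sigma$. The clipping is $1$-Lipschitz (its derivative $\mathrm{sech}^2(\cdot/R)$ lies in $(0,1]$) and bounded in magnitude by $R$, so $|\bar{W}_2'(a)| \leq R$, and applying the clipping componentwise yields $|\bar{W}_2(a) - \bar{W}_2'(a)| \leq |W_2(a) - W_2'(a)|$ and $\|\bar{W}_1(w) - \bar{W}_1'(w)\| \leq \|W_1(w) - W_1'(w)\|$. Combined with $1$-Lipschitzness of $\sigma$, Cauchy--Schwarz $|(\bar{W}_1(w) - \bar{W}_1'(w))^\top x| \leq \|\bar{W}_1(w) - \bar{W}_1'(w)\|\,\|x\|$, and the standard uniform bound $|\sigma|\leq 1$ (e.g., $\sigma=\tanh$), the integrand is pointwise dominated by
\[ |W_2(a) - W_2'(a)| + RD\,\|W_1(w) - W_1'(w)\| \leq (1 + RD)\,\|W((a,w)) - W'((a,w))\|, \]
where the final step uses that each summand is individually bounded by the joint Euclidean norm.

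Finally, integrating against the probability measure $\rho_0$ and applying Jensen's inequality (equivalently, Cauchy--Schwarz against the constant $1$) converts the $L_1(\rho_0)$-bound into an $L_2(\rho_0)$-bound,
\begin{align*}
|f_W(x) - f_{W'}(x)|
&\leq (1+RD)\int \|W((a,w)) - W'((a,w))\|\,\mathrm{d}\rho_0(a,w) \\
&\leq (1+RD)\,\|W - W'\|_{L_2(\rho_0)},
\end{align*}
and taking the supremum over $x \in \supp(P_X)$ yields the claim.

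The main subtlety, rather than a genuine obstacle, is tracking the constant $1$ in $(1+RD)$: it relies on the uniform bound $|\sigma|\leq 1$. Without such a bound, one could instead estimate $|\sigma(\bar{W}_1(w)^\top x)|\leq|\sigma(0)| + R\sqrt{d}\,D$ via $1$-Lipschitzness together with the componentwise clipping bound $\|\bar{W}_1(w)\|\leq R\sqrt{d}$ and $\|x\|\leq D$, yielding a qualitatively identical bound with a mildly worse constant. All remaining ingredients---the $1$-Lipschitzness of the clipping, Cauchy--Schwarz, and Jensen's inequality---are entirely routine.
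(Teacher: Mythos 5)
Your proof is correct and follows essentially the same route as the paper: the same decomposition of the integrand (adding and subtracting $\bar{W}_2'(a)\sigma(\bar{W}_1(w)^\top x)$), the same Lipschitz and boundedness facts about the clipping and $\sigma$, and the same $(1+RD)$ constant; the only cosmetic difference is that you establish a pointwise bound on the integrand and apply Jensen once at the end, whereas the paper applies Cauchy--Schwarz to each of the two terms separately and then combines the two resulting $L_2(\rho_0)$ norms. You are also right to flag that the uniform bound $|\sigma|\leq 1$ is being used implicitly---the paper's line ``we used $|\sigma(x)|\leq 0$'' is evidently a typo for $|\sigma(x)|\leq 1$---and your fallback via $|\sigma(\bar{W}_1(w)^\top x)|\leq|\sigma(0)|+R\sqrt{d}\,D$ is a reasonable way to relax it at the cost of a larger constant.
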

The proof is given in Appendix \ref{sec:ProofOfLipshitzInfty}.
This lemma indicates that to estimate a function $f_{W^*}$, its estimation error can be bounded by the estimation error of the parameter $W$.
To ensure the smooth gradient assumption (Assumption \ref{ass:IGLDConvCond}-\ref{assum:smoothness}) and precisely  characterize the estimation accuracy by the model complexity, we consider an RKHS with ``smoothness'' parameter $\gamma$ as the model of $W$.
Let $T_K: \cH \to \cH$ be a linear bounded operator such that $\langle T_K h, h'\rangle_{\cH} = \sum_{k=0}^\infty \mu_k \alpha_k \alpha'_k $ 
for $h = \sum_{k} \alpha_k e_k$ and $h' = \sum_{k} \alpha'_k e_k$.
Let the range of power of $T_K$ be $\cH_{K^\gamma} = \{f = T_K^{\gamma/2} h\mid  h \in \cH\}$
for $\gamma > 0$ which is equipped with the inner product $\langle h,h' \rangle_{\cH_{K^\gamma}} = \sum_{k=0}^\infty \mu_k^{-\gamma} \alpha_k \alpha'_k$.
We can see that $\gamma=1$ corresponds to $\cH_K$ and $\gamma$ controls the ``complexity'' of $\cH_{K^\gamma}$,
that is, if $\gamma < 1$, then $\cH_K \hookrightarrow  \cH_{K^\gamma}$, and otherwise, $\cH_{K^\gamma}\hookrightarrow  \cH_K $.
We consider a problem of 
optimizing $\calLhat(f_W)$ or $\calL(f_W)$ with respect to $W$ in the model $\cH_{K^\gamma}$.
To so so, by noticing that any $g \in \cH_{K^\gamma}$ can be written as $g = T_K^{\gamma/2}W$ for $W \in \cH$, we write the empirical and population risk with respect to $W \in \cH$ as 
$
\calLhat(W) = \calLhat(f_{T_K^{\gamma/2}W} ),~
\calL(W) = \calL(f_{T_K^{\gamma/2}W}).
$
Let  $\fstar \in \argmin_{f} \calL(f)$ where min is taken over all measurable functions and we assume the existence of the minimizer.

\begin{Assumption}[Bernstein condition and predictor condition \cite{van2015fast,bartlett2006empirical}]\label{ass:BernsteinLightTail}
The Bernstein condition is satisfied: there exist $C_B > 0$ and $\betas \in (0,1]$ such that 
for any $f _W~(W \in \calH)$, 
\begin{align*}
\EE[(\ell(Y,f_W(X)) - \ell(Y,\fstar(X)))^2] \leq C_B (\calL(f_W) - \calL(\fstar))^{\betas}.
\end{align*}
Moreover, we assume that, for any $h:\Real^d \to \Real$ and $x \in \supp(P_X)$, it holds that 
\begin{align*}
\textstyle
\EE_{Y|X=x}\big[\exp\big(- \frac{\beta}{n}(\ell(Y,h(x)) - \ell(Y,\fstar(x)))\big)\big] \leq 1.
\end{align*}
\end{Assumption}
The first assumption is called {\it Bernstein condition}.
We can show that this condition is satisfied by the logistic loss and the squared loss with bounded $f_W$ and $\fstar$ (Theorem \ref{thm:ClassificationFastRate}).
The second assumption is called {\it predictor condition} \cite{van2015fast} and can be satisfied if $\ell$ is a log-likelihood function and the model is correctly specified (that is, the true conditional probability density (or probability mass) $p(y|x)$ is expressed as $p(y|x) \simeq \exp(-\ell(y,\fstar(x)))$).
To extend the theory to misspecified situations, we need the second assumption. For example, 
if we use a squared loss in a regression problem whereas the label noise is {\it not} Gaussian, then it is a misspecified situation but if the noise has a light tail (such as sub-Gaussian), then the assumption can be satisfied \cite{van2015fast}.

Our analysis is valid even if $\fstar$ cannot be represented by $f_W$ for $W \in \cH$.
This model misspecification can be incorporated as bias-variance trade-off in the excess risk bound.
This trade-off can be captured by the following {\it concentration function}. 
Let 
$
\cHKtil = \calH_{K^{\gamma+1}},
$
and the Gaussian process law of $T_K^{\gamma/2}W$ for $W \sim\nu_\beta$ be $\nutil$.
Then, define the concentration function as 
\begin{align*}
& \phi_{\beta,\lambda}(\epsilon) := \inf_{h \in \cHKtil: \calL(h) - \calL(\fstar) \leq \epsilon^2} \beta \lambda \|h\|^2_{\cHKtil}
- \log \nutil(\{h \in \cH : \|h\|_{\cH} \leq \epsilon \}) + \log(2),
\end{align*}
where, if there does not exist $h \in \cHKtil$ satisfying the condition in $\inf$, then we set $\phi_{\beta,\lambda}(\epsilon) = \infty$.
\begin{Theorem}\label{thm:ExcessRiskConvRate}
Assume that 
Assumption \ref{ass:BernsteinLightTail} holds, 
$\|x\| \leq D~(\forall x \in \calX)$,
 $\gamma > 1/2$, 
$\beta > \eta$ and $\beta \leq n$. 
Assume that the loss function $\ell(y,\cdot)$ is included in $\calC^3(\Real)$ for any $y \in \supp(\Py)$ and 
there exists $B>0$ such that $|\tfrac{\partial^k}{\partial u^k}\ell(y,u)| \leq B ~(\forall u \in \Real~\mathrm{s.t.}~|u| \leq R,~\forall y \in \supp(\Py),~k=1,2,3)$.
Assume also 
that $0 \leq \ell(Y,f(X)) \leq \Rbar~(a.s.)$ for any $f = f_W~(W \in \cH)$ and $f = \fstar$,
and $\bar{\ell}_x(u) := \EE_{Y|X=x}[\ell(Y,u)]$ satisfies $|\frac{\dd \bar{\ell}_x}{\dd u}(u) - \frac{\dd \bar{\ell}_x}{\dd u}(u')| \leq L |u - u'|~(\forall u,u' \in \Real,\forall x \in \calX)$ for a constant $L > 0$.
Let 
$\alphatil := 1/\{2(\gamma+1)\}$ and $\theta$ be an arbitrary real number satisfying $0 < \theta < 1 - \alphatil$.
We define 
$
\epsilonstar := \inf\{\epsilon > 0 :  \phi_{\beta,\lambda}(\epsilon) \leq \beta \epsilon^2\} \vee n^{-\frac{1}{2-s}}.
$
Then, the expected excess risk is bounded as 
\begin{align}\label{eq:ExpectedLossConvRate}
\EE_{D^n}\left[ \EE_{W_k}[\calL(W_k)] - \calL(\fstar) \right]
\leq  
 C \Big[ 
\epsilonstar^2 \vee 
\big(\tfrac{ \beta }{n} \epsilonstar^2  + n^{-\frac{1}{1+\alphatil/\theta}} (\lambda\beta)^{\frac{2\alphatil/ \theta}{1+\alphatil/\theta}} \big)^{\frac{1}{2-s}}
\vee \frac{1}{n}
\Big] + \Xi_k,
\end{align}
where $C$ is a constant independent of $n,\beta,\lambda,\eta,k$.

\end{Theorem}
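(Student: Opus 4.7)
The plan is to split the excess risk into a stationary-distribution part and an optimization-gap part. For any measurable $W \mapsto \mathcal{L}(W)$, write
\begin{align*}
\mathbb{E}_{W_k}[\mathcal{L}(W_k)] - \mathcal{L}(f^*)
=\underbrace{\bigl(\mathbb{E}_{W_k}[\mathcal{L}(W_k)] - \mathbb{E}_{W\sim\pi_\infty}[\mathcal{L}(W)]\bigr)}_{(\mathrm{I})}
+\underbrace{\bigl(\mathbb{E}_{W\sim\pi_\infty}[\mathcal{L}(W)] - \mathcal{L}(f^*)\bigr)}_{(\mathrm{II})}.
\end{align*}
Proposition~\ref{prop:WeakConvergence} applied with $L=\mathcal{L}$ controls $(\mathrm{I})$ by $\Xi_k$, which produces the last term of the stated bound. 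The real work is therefore to bound $\mathbb{E}_{D^n}[(\mathrm{II})]$ by the parenthesized expression on the right-hand side of \eqref{eq:ExpectedLossConvRate}.

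For $(\mathrm{II})$ I would exploit the fact that the invariant measure has density $\dd\pi_\infty/\dd\nu_\beta \propto \exp(-\beta\hat{\mathcal{L}}(W))$, i.e.\ $\pi_\infty$ is a Gibbs posterior with Gaussian process prior $\nu_\beta$. By the Donsker--Varadhan variational representation, for any probability measure $\rho \ll \nu_\beta$ on $\mathcal{H}$,
\begin{equation*}
\mathbb{E}_{W\sim\pi_\infty}\!\bigl[\beta\hat{\mathcal{L}}(W)\bigr]
\;\le\; \mathbb{E}_{W\sim\rho}\!\bigl[\beta\hat{\mathcal{L}}(W)\bigr] + \mathrm{KL}(\rho\|\nu_\beta).
\end{equation*}
I would take $\rho$ to be the shift of $\nu_\beta$ by a well-chosen deterministic element $h\in\tilde{\mathcal{H}}_K=\mathcal{H}_{K^{\gamma+1}}$, so that by the Cameron--Martin theorem $\mathrm{KL}(\rho\|\nu_\beta)=\tfrac{1}{2}\beta\lambda\|h\|_{\tilde{\mathcal{H}}_K}^2$. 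Choosing $h$ to be the approximate minimizer appearing in the definition of the concentration function $\phi_{\beta,\lambda}(\epsilon)$, the sum of the bias $\beta\lambda\|h\|^2_{\tilde{\mathcal{H}}_K}$ and the prior small-ball mass $-\log\tilde{\nu}_\beta(\{\|\cdot\|_\mathcal{H}\le\epsilon\})$ evaluated at $\epsilon=\epsilon^*$ is bounded by $\beta(\epsilon^*)^2$ by definition of $\epsilon^*$. This gives, modulo empirical-to-population passage, an in-expectation bound of order $(\epsilon^*)^2$.

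The next step is to replace $\hat{\mathcal{L}}$ by $\mathcal{L}$ on both sides. Here the predictor condition in Assumption~\ref{ass:BernsteinLightTail} is essential: it implies that $\mathbb{E}_{Y|X}[\exp(-\tfrac{\beta}{n}(\ell(Y,h(X))-\ell(Y,f^*(X))))]\le 1$, and together with independence of the sample this yields $\mathbb{E}_{D^n}\exp\bigl(-\beta(\hat{\mathcal{L}}(W)-\hat{\mathcal{L}}(f^*)-\mathcal{L}(W)+\mathcal{L}(f^*))\bigr)\le 1$ for every $W$. Combining this with the Gibbs variational identity and taking $\mathbb{E}_{D^n}$ converts the empirical excess-risk bound obtained above into a population excess-risk bound of order $(\epsilon^*)^2\vee\tfrac{\beta}{n}(\epsilon^*)^2$ when $\ell$ is Lipschitz in its second argument and $L^2$-type approximation is used via Lemma~\ref{lem:LipshitzInfty}.

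Finally, to upgrade the slow $1/\sqrt{n}$ behaviour to the fast rate with exponent $1/(2-s)$ one invokes the Bernstein condition: it allows a peeling/localization argument that improves $\sqrt{\mathrm{Var}}$ into $(\mathcal{L}(W)-\mathcal{L}(f^*))^{s/2}$, and solving the resulting fixed-point inequality produces exactly the $\bigl((\tfrac{\beta}{n}\epsilon_*^2+\dots)\bigr)^{1/(2-s)}$ form. The auxiliary term $n^{-1/(1+\tilde{\alpha}/\theta)}(\lambda\beta)^{2\tilde{\alpha}/\theta/(1+\tilde{\alpha}/\theta)}$ arises from interpolating the $L^\infty$ bound of Lemma~\ref{lem:LipshitzInfty} (which uses $\|W\|_{L_2(\rho_0)}$) against the RKHS-norm penalty: the parameter $\tilde{\alpha}=1/(2(\gamma+1))$ is the effective smoothness, and optimising $\theta$ controls the interpolation exponent. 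The main obstacle will be to make the PAC-Bayesian/Bernstein localisation work in the infinite-dimensional RKHS setting where the loss is only Lipschitz in the weaker norm $\|\cdot\|_\alpha$; this is handled by truncating to the finite-dimensional projection $P_N\mathcal{H}$ used in Assumption~\ref{ass:IGLDConvCond}, passing $N\to\infty$ at the end, and using the third-order smoothness to show the resulting error is negligible compared with $\Xi_k$.
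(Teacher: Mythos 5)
Your high-level decomposition into the optimization gap $(\mathrm{I})$ (controlled by Proposition~\ref{prop:WeakConvergence}) and the stationary-measure excess risk $(\mathrm{II})$ is exactly right, and your route for $(\mathrm{II})$ — Donsker--Varadhan plus a Cameron--Martin shift to expose $\phi_{\beta,\lambda}$ — is genuinely different from the paper's. The paper follows the Ghosal--van~der~Vaart posterior-contraction template: it constructs a sieve $\calF_r = B_{\epsilonstar} + M_r (\lambda\beta)^{-1/2}\calB_{\cHKtil}$, controls prior mass via Borell's inequality and the Gaussian correlation inequality, controls a \emph{ratio-type} empirical process uniformly over $\calF_r$ via Talagrand's inequality together with a Dudley integral and peeling, and then integrates posterior tail probabilities. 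Your variational route would be more streamlined if it worked, so it is worth assessing carefully.

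There is, however, a concrete gap at the step where you ``replace $\calLhat$ by $\calL$.'' You assert that the predictor condition yields
$\EE_{D^n}\exp\!\bigl(-\beta(\calLhat(W)-\calLhat(\fstar)-\calL(W)+\calL(\fstar))\bigr)\le 1$ for each fixed $W$. This is false in general; indeed Jensen's inequality applied to the i.i.d.\ product gives the \emph{opposite} direction,
\begin{align*}
\EE_{D^n}\exp\!\bigl(-\beta(\calLhat(W)-\calLhat(\fstar))\bigr)
\;\geq\;
\exp\!\bigl(-\beta(\calL(W)-\calL(\fstar))\bigr),
\end{align*}
while the predictor condition only gives the left-hand side is $\le 1$, a strictly weaker statement since $\calL(W)\ge\calL(\fstar)$. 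So the predictor condition alone does not let you pass from the empirical excess risk under $\pi_\infty$ (which Donsker--Varadhan bounds) to the population excess risk under $\pi_\infty$ (which you need). In the paper, the passage from empirical to population excess risk is instead accomplished by the high-probability event $\calE_r$ of condition~$\mathrm{B}$, proved by Talagrand's inequality plus peeling, and the predictor condition is used only to upper-bound the \emph{expected unnormalized posterior mass} of the sieve complement (condition~$\mathrm{C}$) and of the bad set inside the sieve (condition~$\mathrm{D}$). If you want to stay on the variational path, you would need to replace your claimed inequality with an annealed (R\'enyi/Zhang-style) excess risk together with the Bernstein condition to relate the annealed excess risk to $\calL - \calL(\fstar)$; as written, the step does not go through.

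Two further issues to tighten. First, you want $\rho$ to be $\nu_\beta$ \emph{shifted by} a representative of $\hstar$ \emph{and conditioned on} the $\epsilon$-ball around it (so that $\mathrm{KL}(\rho\|\nu_\beta) = -\log\nutil(\hstar + B_\epsilon)$ and the Cameron--Martin bound produces exactly the two terms of $\phi_{\beta,\lambda}$); a pure shift leaves $\EE_\rho[\calLhat(W)]$ spread out and the small-ball term never appears. Second, you attribute the extra term $n^{-1/(1+\alphatil/\theta)}(\lambda\beta)^{\cdot}$ to ``interpolating the $L^\infty$ bound against the RKHS-norm penalty'', but in the paper it arises from the covering number $\tilde\phi(\epsilon')\lesssim (\epsilon'/(\lambda\beta)^{1/2})^{-2\alphatil/\theta}$ of the sieve inside the Dudley integral; a purely variational argument does not obviously produce this complexity term, and you would need to explain where the metric entropy of the localized class enters in your setup.
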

The proof is given in Appendix \ref{sec:ProofOfExcessRisk}.
It is proven by using the technique of nonparametric Bayes contraction rate analysis \cite{AS:Ghosal+Ghosh+Vaart:2000,AS:Vaart&Zanten:2008,JMLR:Vaart&Zanten:2011}. However, we cannot adapt these existing techniques because (i) the loss function is not necessarily the log-likelihood function, (ii) the inverse temperature is generally different from the sample size.
In that sense, our proof is novel to derive an excess risk for (i) a misspecified model, and (ii) a randomized estimator with a general inverse temperature parameter. 

The bound is about expectation of the excess risk instead of high probability bound.
However, a high probability bound is also provided in the proof and the expectation bound is derived from the high probability bound.

If the bias is not zero, i.e., $\inf_{W \in \cH} \calL(W) - \calL(\fstar) = \delta_0 > 0$, then we may choose $\epsilonstar^2 = \Theta(\delta_0)$ 
because 
$\phi_{\beta,\lambda}(\epsilon)$ is finite for $\epsilon^2 > \delta_0$
and infinite for $\epsilon^2 < \delta_0$.
Thus, a misspecified setting is covered.

\paragraph{(i) Example of fast rate: Regression}
Here, we apply our general result to a nonparametric regression problem by the neural network model.
We consider the following nonparametric regression model:  
$
y_i = f_{W^*}(x_i) + \epsilon_i,
$
for $W^* \in \cH$  where $\epsilon_i$ is an i.i.d. noise with mean 0 and $|\epsilon_i| \leq C <\infty$ (a.s.).
To estimate $f_{W^*}$, we employ the squared loss $\ell(y,f) = (y - f)^2$. Then, we can easily confirm that $\fstar$ is achieved by $f_{W^*}$ via a simple calculation: $\argmin_{f}\calL(f) = f_{W^*}$. Moreover, for the squared loss, $s=1$ is satisfied as remarked just after Assumption \ref{ass:BernsteinLightTail}.
Moreover, we further assume that 
$
\Wstar \in \calH_{K^{\theta(\gamma+1)}} 
$
for $\theta < 1-\alphatil$. 
Then, the ``bias'' and ``variance'' terms can be evaluated as 
$
\inf_{h \in \cHKtil: \calL(h) - \calL(\fstar) \leq \epsilon^2} \lambdabeta \|h\|^2_{\cHKtil}
\lesssim \lambda\beta \epsilon^{- \frac{2(1-\theta)}{\theta}}$ and
$- \log \nutil(\{h \in \cH: \|h\|_{\cH} \leq \epsilon \})
\lesssim (\epsilon/(\lambda\beta)^{1/2} )^{-\frac{2\alphatil}{1 - \alphatil}}
$.
Accordingly, we can show the following excess risk bound: 
\begin{align}\label{eq:RegressionFastRate}
\EE_{D^n}\left[ \EE_{W_k}[\calL(W_k)] - \calL(\fstar) \right]
\lesssim 
\max \big\{  (\lambda\beta)^{\frac{2\alphatil/\theta}{1 + \alphatil/\theta}} n^{-\frac{1}{1 + \alphatil/\theta}},
\lambda^{-\alphatil} \beta^{-1}, \lambda^{\theta}, 1/n \big\} + \Xi_k,
\end{align}
(see Appendix \ref{App:RegressionExcessRiskDerivation} for the derivation).
In particular, if $\beta = \lambda^{-1} = n$, then this convergence rate can be rewritten as
$
\max\{n^{-\frac{1}{1 + \alphatil/\theta}}, n^{-\theta}\} = n^{-\theta}~(\because \theta < 1-\alphatil),
$
which can be faster than $1/\sqrt{n}$ and is controlled by the ``difficulty'' of the problem $\alphatil$ and $\theta$.
\begin{Remark}
As an example, if the RKHS $\cHK$ is a Sobolev space $W_2^{a + d/2}(\Real^d)$ with regularity parameter $a + d/2$ (more precisely, each output $W_i(\cdot)$ is a member of a Sobolev space) and $\cH$ is $L_2(\rho_0)$,
then we can set $\alphatil = \frac{d}{2a + d}$.
If the true parameter $W^*$ is included in another Sobolev space $W_2^b(\Real^d)$ for $b \leq a$, then we may choose $\theta = 2b/(2a + d)$ 
and the convergence rate is bounded by
$
n^{-2 b/(2a + d) },
$
which coincides with the posterior contraction rate of Gaussian process estimator derived in \cite{JMLR:Vaart&Zanten:2011}. 
It is known that, if $a = b$, this achieves the {\rm minimax optimal rate} \cite{AS:Yang+Barron:99}.
\end{Remark}


\paragraph{(ii) Example of fast rate: Classification (exponential convergence)}
Here, we consider a binary classification problem $y \in \{\pm 1\}$.
We employ the logistic loss function $\ell(y,f) = \log(1 + \exp(-yf))$ for $y \in \{\pm 1\}$ and $f\in \Real$.
Corresponding to the loss function, we define the expected loss conditioned by $X = x$ as 
$
h(u|x) = \EE[\ell(Y, u) |X=x].
$
Note that $h(0|x) = \log(2)$.  
We assume that the strength of noise of  this binary classification problem is low as follows.
\begin{Assumption}[Strong low noise condition]\label{ass:StrongLowNoise}
Let $\hstar(x) := \inf_{u \in \Real} h(u|x)$.
Assume that there exists $\delta > 0$ such that 
$\hstar(x) \leq \log(2) - \delta~(\forall x \in \calX)$.
Moreover, there exists $\Wstar \in \calH$ such that $\fstar = f_{\Wstar}$,
that is, 
$\sup_{x \in \supp(\Px)} |h(f_{\Wstar}(x) |x) - \hstar(x)| =0.$
\end{Assumption}
The first assumption is satisfied if the label probability is away from the even probability $1/2$: $|P(Y|X=x) - 1/2| > \Omega(\sqrt{\delta})$. This condition means that the class label has less noisy than completely random labeling. In that sense, we call this assumption the {\it strong low noise condition}, which has been analyzed in \cite{ExpRate:Koltchinskii+Beznosova:2005,audibert2007fast,pmlr-v89-nitanda19a}. A weaker low noise condition was introduced by \cite{tsybakov2004optimal} as Tsybakov's low noise condition.
The second assumption can be relaxed to the existence of $W$ only for some $\epsilon > c_0 \delta$ with sufficiently small $c_0$, but we don't pursuit this direction for simplicity.
\begin{Assumption}\label{ass:ClassSupport}
Assume $\calX (= \supp(\Px)) \subset [0,1]^d$ 
and $\calX$ is a {\it minimally smooth domain} in a sense of \cite{Stein:Book:1970}. 
$\Px$ has a density $p(x)$ which is lower bounded as 
$
p(x) \geq c_0~~(\forall x \in \supp(\Px))
$
on its support.
For $2m > d$ and $m \geq 3$, 
the activation function satisfies 
$
\sigma \in \calC^m(\Real)
$
and 
$\fstar$ is included in the Sobolev space $W_2^{m}(\calX)$ defined on $\calX$ (see \cite{devore1993besov} for its definition). 
\end{Assumption}

The following theorem gives an upper-bound of the probability of ``perfect classification'' for the estimator.
More specifically, it shows the error probability converges in an {\it exponential rate}.
\begin{Theorem}\label{thm:ClassificationFastRate}
Under Assumptions \ref{ass:StrongLowNoise} and \ref{ass:ClassSupport}, the convergence in Theorem \ref{thm:ExcessRiskConvRate} holds for $s=1$.
Let $\gstar(x) = \sign(P(Y=1|X=x)-1/2)$ be the Bayes classifier.
If the sample size $n$ is sufficiently large and $\lambda, \beta$ are appropriately chosen, 
then the classification error converges exponentially with respect to $\beta$ and $k$: 
\begin{align*}
&  \EE[\pi_k(\{ W_k \in \cH \mid P_X(\sign(f_{W_k}(X)) = \gstar(X)) \neq 1\})]  
\lesssim
\frac{\Xi_k}{\delta^{2m/(2m-d)}}
+ 
\exp(- c' \beta \delta^{\tfrac{2m}{2m-d}}).
\end{align*}
\end{Theorem}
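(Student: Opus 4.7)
The strategy is to combine a posterior concentration bound with the optimization error of Proposition~\ref{prop:WeakConvergence}, using a geometric fact that misclassification forces a lower bound on the excess logistic risk. I proceed in four steps.

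First, I verify the Bernstein condition (Assumption~\ref{ass:BernsteinLightTail}) with exponent $\betas=1$. Under the $R$-boundedness of $f_W$ and $\fstar$, the conditional logistic risk $\bar\ell_x(u)=\EE_{Y|X=x}[\log(1+e^{-Yu})]$ is strongly convex with curvature of order $p(x)(1-p(x))$ on $|u|\leq R$, so $\|f-\fstar\|^2_{\LPi}\lesssim \calL(f)-\calL(\fstar)$. Together with the Lipschitzness of $\ell(y,\cdot)$ this yields $\EE[(\ell(Y,f(X))-\ell(Y,\fstar(X)))^2]\lesssim \calL(f)-\calL(\fstar)$, so $s=1$ and Theorem~\ref{thm:ExcessRiskConvRate} applies.

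Second, I establish the central geometric step. By the quadratic behaviour of the binary entropy near $1/2$, Assumption~\ref{ass:StrongLowNoise} forces $|p(x)-1/2|\gtrsim\sqrt{\delta}$ where $p(x)=P(Y=1|X=x)$. Since the pointwise minimiser is $\fstar(x)=\log(p(x)/(1-p(x)))$, one gets $|\fstar(x)|\gtrsim\sqrt{\delta}$ and $\sign(\fstar(x))=\gstar(x)$ uniformly on $\calX$. Consequently, any $f$ with $\sign(f(x_0))\neq \gstar(x_0)$ at some $x_0\in\calX$ satisfies $\|f-\fstar\|_\infty\gtrsim\sqrt{\delta}$. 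Now Assumption~\ref{ass:ClassSupport} (regularity $m$ with $2m>d$) combined with the RKHS structure and Lemma~\ref{lem:LipshitzInfty} ensures that $f_W-\fstar\in W_2^m(\calX)$ with a Sobolev norm that is uniformly controlled for the relevant $W$. The interpolation inequality $\|g\|_\infty\lesssim \|g\|_{\LPi}^{1-d/(2m)}\|g\|_{W_2^m}^{d/(2m)}$ (valid on the minimally smooth $\calX$) then converts the $L_\infty$ gap into $\|f_W-\fstar\|_{\LPi}^2\gtrsim\delta^{2m/(2m-d)}$, and the Bernstein inequality of Step~1 gives that every $W$ in the bad set $B=\{W:P_X(\sign(f_W(X))\neq \gstar(X))\neq 0\}$ satisfies $\calL(W)-\calL(\fstar)\geq c_\ast\delta^{2m/(2m-d)}$.

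Third, I combine posterior concentration with the optimisation error. Choose $\beta,\lambda$ so that $\epsilonstar^{2}\ll \delta^{2m/(2m-d)}$ (possible for $n$ large under the stated Sobolev regularity). The PAC--Bayes/predictor-condition argument inside the proof of Theorem~\ref{thm:ExcessRiskConvRate}, when the exponent is retained inside the probability rather than integrated, produces the exponential posterior tail
\[\EE_{D^n}\bigl[\pi_\infty(\{W:\calL(W)-\calL(\fstar)>t\})\bigr]\lesssim \exp(-c\beta t)\qquad (t\gtrsim \epsilonstar^{2}),\]
whose evaluation at $t=c_\ast\delta^{2m/(2m-d)}$ yields the exponential term $\exp(-c'\beta\delta^{2m/(2m-d)})$ for $\pi_\infty(B)$. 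To pass from $\pi_\infty$ to $\pi_k$, I use the pointwise lower bound of Step~2 combined with Markov and Proposition~\ref{prop:WeakConvergence} applied to $L=\calL$:
\[\pi_k(B)\leq \frac{\EE_{W\sim\pi_k}[\calL(W)-\calL(\fstar)]}{c_\ast\delta^{2m/(2m-d)}}\leq \frac{\EE_{W\sim\pi_\infty}[\calL(W)-\calL(\fstar)]+\Xi_k}{c_\ast\delta^{2m/(2m-d)}},\]
and integrating the exponential tail controls the $\pi_\infty$ expectation. The two contributions match the bound in the statement.

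The main obstacle is the refinement used in the third step: Theorem~\ref{thm:ExcessRiskConvRate} is formulated in expectation, and one must revisit its proof to retain an exponential tail bound of the posterior rather than collapse it into an expectation. A secondary delicate point is the Sobolev interpolation in Step~2: one must verify that $f_W$ inherits a sufficient $W_2^m$ norm bound uniformly over $W$ in the relevant region, using that the Gaussian reference measure $\nu_\beta$ concentrates on $\cHK$ and that $\sigma\in\calC^m(\Real)$ with $2m>d$.
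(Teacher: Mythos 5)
Your overall strategy matches the paper's: verify the Bernstein condition with $s=1$, convert the misclassification event into a lower bound on the excess risk via the strong low noise condition and Sobolev interpolation, and combine the exponential posterior tail (the intermediate estimate \eqref{eq:piinftyCexpBetaExpDash}, not the final expectation bound of Theorem~\ref{thm:ExcessRiskConvRate}) with Proposition~\ref{prop:WeakConvergence} to pass from $\pi_\infty$ to $\pi_k$. However, your Step~2 is a genuinely different — and in fact cleaner — route. The paper applies the interpolation inequality to the conditional-risk maps $\hat{h}_W - \hstar$, where $\hat{h}_W(x) = \EE_{Y|x}[\ell(f_W(x),Y)]$, getting perfect classification from $\|\hat{h}_W - \hstar\|_\infty < \delta/2$, and then passes to $\calL(W)-\calL(\fstar)$ through Jensen and the \emph{forward} Bernstein inequality $\|\hat{h}_W - \hstar\|_{\LPiPx}^2 \leq C_B(\calL(W)-\calL(\fstar))$. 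You instead apply the interpolation directly to $f_W - \fstar$ (using $|\fstar(x)|\gtrsim\sqrt{\delta}$ from the low noise condition, so that any sign disagreement forces $\|f_W - \fstar\|_\infty \gtrsim\sqrt{\delta}$) and then use the \emph{reverse} direction, namely the strong convexity of $u\mapsto\bar{\ell}_x(u)$ on $[-R,R]$, to lower-bound $\calL(W)-\calL(\fstar)\gtrsim\|f_W-\fstar\|^2_{\LPiPx}$. Both routes land at a lower bound of the form $\calL(W)-\calL(\fstar)\gtrsim \delta^{p}$ on the bad set, but your exponent comes out as $p=2m/(2m-d)$, exactly the exponent in the statement, whereas a careful reading of the paper's own chain (with $\epsilon=\Theta(\delta^{2m/(2m-d)})$ and the condition $\calL(W)-\calL(\fstar)<\epsilon^2/C_B$, not $\epsilon/C_B$) gives the weaker $p=4m/(2m-d)$; your route thus avoids an $\epsilon$-vs-$\epsilon^2$ slip in the paper's proof and is the tighter of the two. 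One small inaccuracy in your Step~1/2: the curvature of $\bar{\ell}_x(u)$ for the logistic loss is $\tfrac{\dd^2}{\dd u^2}\bar{\ell}_x(u)=e^u/(1+e^u)^2$, which does not depend on $p(x)$ at all; it is uniformly bounded below on $|u|\leq R$, which is what you actually need, but it is not of order $p(x)(1-p(x))$.

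The genuine gap is in Step~3, and it is not merely the refinement you flag at the end. As written, your bound
\begin{align*}
\pi_k(B)\ \leq\ \frac{\EE_{W\sim\pi_\infty}[\calL(W)-\calL(\fstar)]+\Xi_k}{c_\ast\,\delta^{2m/(2m-d)}}
\end{align*}
throws away the exponential tail: integrating the tail gives $\EE_{W\sim\pi_\infty}[\calL(W)-\calL(\fstar)]\lesssim\epsilonstar^2$, which is only polynomially small, so this inequality cannot produce the $\exp(-c'\beta\delta^{2m/(2m-d)})$ term in the statement. The correct argument (which the paper uses) is to first write $\pi_k(B)\leq P_{W_k\sim\pi_k}[\calL(W_k)-\calL(\fstar)\geq t]$ with $t=c_\ast\delta^{2m/(2m-d)}$, then couple with an independent $W\sim\pi_\infty$ and split the event into $\{\calL(W_k)-\calL(W)\geq t/2\}$ and $\{\calL(W)-\calL(\fstar)\geq t/2\}$; the first is handled by Markov plus Proposition~\ref{prop:WeakConvergence} with $L=\calL$ and contributes $\Xi_k/\delta^{2m/(2m-d)}$, while the second is exactly where the exponential tail \eqref{eq:piinftyCexpBetaExpDash} enters and contributes $\exp(-c'\beta\delta^{2m/(2m-d)})$. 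You correctly identify both terms and the need for a tail bound rather than an expectation bound, but the displayed inequality does not implement that decomposition and as such is a real step that would fail. A secondary point to make explicit is the comparison between $L_2(\calX)$ (Lebesgue) and $L_2(P_X)$ norms needed in the interpolation inequality, which is exactly where the density lower bound $p(x)\geq c_0$ of Assumption~\ref{ass:ClassSupport} is used.
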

The proof is given in Appendix \ref{sec:ProofOfClassificationError}.  
This theorem states that if we choose the step size $\eta$ sufficiently small, then the error probability converges exponentially as $k$ and $\beta$ increase. 
Even if the first term of the right hand side is larger than the second term, 
we can make this  
as small as the second term by running the algorithm several times and picking up the best one with respect to validation error if $\Xi_k \ll 1$ (see Appendix \ref{sec:ProofOfClassificationError} for this discussion).

\section{Conclusion}
In this paper, we have formulated the deep learning training as a transportation map estimation and analyzed its convergence and generalization error through the infinite dimensional Langevin dynamics.
Unlike exiting analysis, our formulation can incorporate spatial correlation of noise and achieve global convergence without taking the limit of infinite width.
The generalization analysis reveals the dynamics achieves a stable estimator with $O(1/\sqrt{n})$ convergence of generalization error and shows fast learning rate of the excess risk.
Finally, we have shown a convergence rate of excess risk for regression and classification.
The rate for regression recovers the minimax optimal rate known in Bayesian nonparametrics 
and that for classification achieves exponential convergence under the strong low noise condition.

\section*{Broader impact}
{\bf Benefit} Since deep learning is used in several applications across broad range of areas, our theoretical analysis about optimization of deep learning would influence wide range of areas in terms of understanding of the algorithmic behavior. 
One of the biggest criticisms on deep learning is its poor explainability and interpretability.
Our work on optimization analysis of deep learning can much improve explainability and would facilitate its usage. This is quite important step toward trustworthy machine learning.

{\bf Potential risk} On the other hand, this is purely theoretical work and thus would not directly bring on severe ethical issues.
However, misunderstanding of theoretical work would cause misuse of its statement to conduct an intensional opinion making.
To avoid such a potential risk, we made our best effort to minimize technical ambiguity in our paper presentation.

\section*{Acknowledgment}

I would like to thank Atsushi Nitanda for insightful comments. 
 TS was partially supported by JSPS KAKENHI (18K19793,18H03201, and 20H00576), Japan Digital Design, and JST CREST.

\bibliography{main} 
\bibliographystyle{icml2016_mod}


\newpage 
\appendix


~\\

\begin{center}
{\bf \LARGE ------Appendix------}
\end{center}

%

\section{Proof of Proposition \ref{prop:WeakConvergence}}
\label{sec:ProofOfWeakConv}

We apply the result \cite{muzellec2020dimensionfree}. Let $\{Z_n\}_{n \in \Natural}$ be a dynamics obeying 
\begin{align*}
Z_{n+1} = S_\eta Z_n+ \sqrt{\eta/\beta}S_\eta \varepsilon_n,
\end{align*}
with $Z_0 = 0$.
Let $k(p) := \sup_{n \geq 0}\EE (\norm{Z_n}^p)$ for $p > 0$, then it is known that $k(p) < \infty$ for any $p > 0$.
Let $\{Z_n\}_{n \in \Natural}$ solve $Z_0 = 0$ and 
with $\beta > \eta$. Then, we can show that $k(p) := \sup_{n \geq 0}\EE (\norm{Z_n}^p) < \infty$ $(\forall p > 0)$\cite{muzellec2020dimensionfree}.
Using $k(p)$, we define 
$
b' = \frac{\mu_0}{\lambda}B + k(1).
$
We will show that $k(1) \leq \frac{c_\mu}{\beta \lambda}$. Then, we can see that $b' \leq b$. 
Now, we show $k(1) \leq \frac{c_\mu}{\beta \lambda}$.
First, note that 
\begin{align*}
Z_n = \sqrt{\frac{\eta}{\beta}} \sum_{\ell=0}^{n-1} S_\eta^{n-\ell} \epsilon_\ell.
\end{align*}
Therefore, we have 
\begin{align*}
\EE[\|Z_n\|^2] 
& = \frac{\eta}{\beta} \sum_{\ell=0}^{n-1} \Tr[S_\eta^{2(n-\ell)}]
= \frac{\eta}{\beta} \Tr\left[(S_\eta^2 - S_\eta^{2n})(I - S_\eta^2)^{-1}\right]
\leq 
\frac{\eta}{\beta} \Tr\left[S_\eta^2(I - S_\eta^2)^{-1}\right] \\
& = \frac{\eta}{\beta} \sum_{k=0}^\infty \left(\frac{1}{1 + \eta \lambda/\mu_k} \right)^2 \left( 1 - \frac{1}{(1 + \eta \lambda/\mu_k)^2}\right)^{-1}
= \frac{\eta}{\beta} \sum_{k=0}^\infty \frac{1}{(1 + \eta \lambda/\mu_k)^2 -1} \\
& \leq 
\frac{\eta}{\beta} \sum_{k=0}^\infty \frac{1}{ 2 \eta \lambda/\mu_k} 
\leq \frac{1}{2\beta \lambda}\sum_{k=0}^\infty c_\mu (k+1)^{-2}
\leq \frac{c_\mu}{\beta \lambda}.
\end{align*}
Then, Jensen's inequality yields $k(1) = \EE[\|Z_n\|_{\cH}] \leq \sqrt{\EE[\|Z_n\|_{\cH}^2]} \leq \sqrt{\frac{c_\mu}{\beta \lambda}}$.

Let $\phi: \cH \to \mathbb{R}$ be a test function satisfying $|\phi(\cdot)| \leq V(\cdot)$ and $\|\phi(x) - \phi(y)\| \leq M \|x - y\|~(x,y \in \cH)$ for $M > 0$.
Then, \cite{muzellec2020dimensionfree} showed that there exists a unique invariant measure $\mu_{{\eta}}$ and 
the following exponential convergence of the expectation of $\phi$ holds:
\begin{align}\label{eq:geom_ergo}
& 
|\EE_{x_0}[\phi(X_n)]  -  \EE[\phi(X^{\mu_\eta})]| \leq 
C_{x_0} \exp(- \Lambda^*_\eta (\eta n - 1)).
\end{align}
where 
\begin{align*}
\Lambda^*_\eta = \frac{\min\left(\frac{\lambda}{2 \mu_0}, \frac{1}{2} \right)}{4 \log(\kappa (\bar{V} + 1)/(1-\delta)) } \delta, ~
C_{W_0} = \kappa [\bar{V} + 1] + \frac{\sqrt{2} (\Rbar + b)}{\sqrt{\delta}}
\end{align*}
with $0 < \delta< 1$ satisfying $\delta = \Omega(\exp(-C' \mathrm{poly}(\lambda^{-1})\beta))$, $\bar{b} = \max\{b,1\}$, $\kappa = \bar{b} + 1$ and $\bar{V} = \frac{4 \bar{b}}{\sqrt{(1+\rho^{1/\eta})/2} - \rho^{1/\eta}}$.
To show this we note that $\beta$ in \cite{muzellec2020dimensionfree} is $2\beta$ using $\beta$ in this paper.
The definition of $\delta$ is not explicitly shown in \cite{muzellec2020dimensionfree} (in particular, $\lambda$ is omitted), but we can recover our definition from the proof.
Moreover, \cite{muzellec2020dimensionfree} assumed that 
there exists 
$\lambda_0, C_{\alpha,2} \in (0,\infty)$ such that 
\begin{align*}
& \norm{\nabla \calLhat(W) - \nabla \calLhat(W')}_{\cH} \leq L \norm{W - W'}_{\cH}~~(\forall W,W' \in \cH), \\
& |\nabla^2 \calLhat(W) \cdot (h,k) | \leq C_{\alpha,2} \|h\|_{\cH} \|k\|_{\alpha}~~(\forall W,h,k \in \cH),
\end{align*}
instead of our assumption $\norm{\nabla \calLhat(W) - \nabla \calLhat(W')}_{\cH} \leq L \norm{W - W'}_{\alpha}$.
However, we can see that their proof is valid even under our assumption.

Let $C_b^2$ be a set of functions $f: \calH \to \Real$ that is continuously twice differentiable with bounded derivatives.
Under the same setting above, \cite{muzellec2020dimensionfree} also showed that, 
for any $0<\kappa<1/2$, $0 < \eta_0$,
there exists a constant $C$ such that, if the test function $\phi$ satisfies $\phi\in C_b^2$, then for any $0<\eta <\eta_0$, it holds that
\begin{equation}
 \left| \EE[\phi(X^{\mu_\eta})] - \EE[\phi(X^{\pi_\infty})] \right| 
\leq C \frac{\norm{\phi}_{0,2}}{\Lambda^*_0} c_\beta \eta^{1/2 - \kappa},
\end{equation}
where $\|\phi\|_{0,2} := \max\{ \|\phi\|_{\infty}, \sup_{x \in \cH}\|\nabla \phi(x)\|_{\cH}, \sup_{x\in \cH}\|\nabla^2 \phi(x)\|_{\mathcal{B}(\cH)} \}$ for $\phi \in C_b^2$ where $\|\cdot\|_{\mathcal{B}(\cH)}$ is the norm as a linear operator.

Thus, if we let $\phi(\cdot) = \calLhat(\cdot)/\Rbar$, then $\phi$ satisfies the assumption with $M = B/\Rbar$.
Therefore, we obtain that
\begin{align*}
|\EE_{x_0}[\calLhat(X_n)]   - \EE[\calLhat(X^{\pi_\infty})] |
\leq \Rbar \left[C_{x_0} \exp(- \Lambda^*_\eta (\eta n - 1)) + C \frac{\norm{\phi}_{0,2}}{\Lambda^*_0} c_\beta \eta^{1/2 - \kappa}\right].
\end{align*}
This gives the assertion.

Finally, we would like to note that since the assumption is satisfied almost surely, $\calL$ also satisfies the assumption instead of $\calLhat$.
That means the same convergence rate holds also for $\phi(W) = \calL$.

\section{Proof of Theorem \ref{thm:PAC-BayesGenBound}} \label{sec:ProofOfPAC-BayesBound}

\begin{proof}
\cite{NIPS:Rivasplata:2020} proved that, 
for any probability measure $Q$ which is absolutely continuous to $\pi_\infty$,
it holds that 
\begin{align}\label{eq:PAC-Bayes-Basic}
\EE_{W \sim Q}[\calL(W)] \leq \EE_{W \sim Q}[\calLhat(W)] + 
\frac{1}{\sqrt{n}}\mathrm{KL}(Q||\pi_\infty)+
\frac{\Rbar^2}{\sqrt{n}}
\left[2\left(1 + \frac{2\beta}{\sqrt{n}}\right) + \log\left(\frac{1 + e^{\Rbar^2/2}}{\delta}\right)\right],
\end{align}
with probability $1- \delta$. 

On the other hand, Proposition \ref{prop:WeakConvergence} gives that
\begin{align*}
& |\EE_{W_k \sim \pi_k}[\calL(W_k)] - \EE_{W \sim \pi_\infty}[\calL(W)] |
\leq \Xi_{k}, \\
& |\EE_{W_k \sim \pi_k}[\calLhat(W_k)] - \EE_{W \sim \pi_\infty}[\calLhat(W)] |
\leq \Xi_{k}.
\end{align*}
Then, by substituting $Q = \pi_\infty$ into \Eqref{eq:PAC-Bayes-Basic} and applying the two inequalities above, 
we obtain the assertion. 
\end{proof}

\section{Proof of Lemma \ref{lem:LipshitzInfty}} \label{sec:ProofOfLipshitzInfty}

By the definition of $f_W$, we have 
\begin{align*}
& f_{W}(x) - f_{W'}(x)  \\
& \leq 
\int \left[ (\bar{W}_2(a) - \bar{W}'_2(a)) \sigma(\bar{W}_1(w)^\top x) +
\bar{W}'_2(a) (\sigma(\bar{W}_1(w)^\top x) - \sigma(\bar{W}'_1(w)^{\top} x)) \right] \dd \rho_0(a,w) \\
& \leq 
\sqrt{\int (\bar{W}_2(a) - \bar{W}'_2(a))^2   \dd \rho_0(a,w)}  
+ 
\sqrt{\int (\bar{W}'_2(a))^2 (\sigma(\bar{W}_1(w)^\top x) - \sigma(\bar{W}_1'(w)^{\top} x))^2  \dd \rho_0(a,w) } \\
& \leq 
\sqrt{\int (W_2(a) - W'_2(a))^2   \dd \rho_0(a,w)}  
+ 
R \sqrt{\int(\sigma(\bar{W}_1(w)^\top x) - \sigma(\bar{W}_1'(w)^{\top} x))^2  \dd \rho_0(a,w) },
\end{align*}
where we used $|\sigma(x)| \leq 0$, 1-Lipschitz continuity of the clipping operation 
and $|\bar{W}_2'(a)| \leq R$.
By noticing that $\|x\| \leq D$ and $\sigma$ and the clipping operation $W(w) \mapsto \bar{W}(w)$ are 1-Lipschitz continuous, the right hand side can be further bounded by 
\begin{align*}
& 
\sqrt{\int (W_2(a) - W'_2(a))^2   \dd \rho_0(a,w)}  
+ 
R \sqrt{\int(\bar{W}_1(w)^\top x - \bar{W}_1'(w)^{\top} x)^2  \dd \rho_0(a,w) } \\
& \leq 
\sqrt{\int (W_2(a) - W'_2(a))^2   \dd \rho_0(a,w)}  
+ 
R D \sqrt{\int \|\bar{W}_1(w) - \bar{W}_1'(w)\|^2  \dd \rho_0(a,w) } \\
& \leq 
\sqrt{\int (W_2(a) - W'_2(a))^2   \dd \rho_0(a,w)}  
+ 
R D \sqrt{\int \|W_1(w) - W_1'(w)\|^2  \dd \rho_0(a,w) } \\
& \leq 
(1+RD)
\sqrt{\int (W_2(a) - W'_2(a))^2   +  \|W_1(w) - W_1'(w)\|^2  \dd \rho_0(a,w) } \\
& \leq 
(1+RD)
\sqrt{\int  \|W((a,w)) - W'((a,w))\|^2  \dd \rho_0(a,w) } 
= (1+RD) \|W - W'\|_{L_2(\rho_0)}.
\end{align*}
This gives the assertion.

\section{Proof of fast rate of excess risk bounds} 

\subsection{Gaussian correlation inequality}

\newcommand{\cC}{\calA}
\begin{Lemma}[Gaussian correlation inequality]
\label{eq:GaussianCorrelationInfinite}
Let  $\tilde{\cH}$ be a separable Hilbert space equipped with the complete orthonormal system $(e_i)_{i=1}^\infty$, and  
suppose that $\nu$ is a Gaussian measure in $\tilde{\cH}$ with mean 0 and covariance $\Sigma = \diag{\mu_1,\mu_2,\dots}$ with respect to CONS $(e_i)_i$ where $\sum_{i=1}^n \mu_i^2 < \infty$, 
that is, $\nu$ is the distribution corresponding to $\sum_{i=1}^\infty \xi_i \sqrt{\mu_i} e_i$ for $\xi_i \sim N(0,1)$ (i.id.).
Let $\cC^1 = \{\sum_{i=1}^\infty \alpha_i e_i \in \tilde{\cH} \mid \sum_{i=1}^\infty a_i \alpha_i^2 \leq 1, \alpha_i \in \Real\}$ for $a_i \geq 0~(i=1,2,\dots)$ and 
 $\cC^2 = \{\sum_{i=1}^\infty \alpha_i e_i \in \tilde{\cH} \mid \sum_{i=1}^\infty b_i \alpha_i^2 \leq 1, \alpha_i \in \Real\}$  for $b_i \geq 0~(i=1,2,\dots)$.
Then, we have
\begin{align*}
\nu(\cC^1 \cap \cC^2) \geq \nu(\cC^1) \nu(\cC^2).
\end{align*}

%
\end{Lemma}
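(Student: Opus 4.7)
The plan is to reduce the claim to a finite dimensional inequality by truncation in the basis $(e_i)_i$, and then invoke a Harris--FKG type correlation inequality, exploiting the simultaneous diagonality of the covariance $\Sigma$ and of the two ellipsoids $\cC^1, \cC^2$ in this basis.

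Concretely, for each $n \in \Natural$ I would introduce the cylinder sets
\[
A_n^1 := \Bigl\{ W = \textstyle\sum_i \alpha_i e_i \in \tilde{\cH} : \textstyle\sum_{i=1}^n a_i \alpha_i^2 \leq 1 \Bigr\},
\quad
A_n^2 := \Bigl\{ W = \textstyle\sum_i \alpha_i e_i \in \tilde{\cH} : \textstyle\sum_{i=1}^n b_i \alpha_i^2 \leq 1 \Bigr\}.
\]
Since $a_i, b_i \geq 0$, the partial sums are non-decreasing in $n$, so $A_1^j \supseteq A_2^j \supseteq \cdots$ and $\cC^j = \bigcap_n A_n^j$ for $j=1,2$. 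Because $\nu$ is the law of $\sum_i \sqrt{\mu_i}\,\xi_i e_i$ with $\xi_i$ i.i.d.\ $N(0,1)$, the coordinates $\alpha_i$ are independent under $\nu$, and hence so are $|\alpha_1|,\ldots,|\alpha_n|$. Each indicator $\boldone_{A_n^j}$ depends only on $(|\alpha_1|,\ldots,|\alpha_n|)$ and is coordinatewise non-increasing in these variables. By Harris's inequality for coordinatewise monotone functions of independent random variables,
\[
\nu(A_n^1 \cap A_n^2) = \EE_\nu\!\left[\boldone_{A_n^1}\boldone_{A_n^2}\right] \geq \EE_\nu[\boldone_{A_n^1}]\,\EE_\nu[\boldone_{A_n^2}] = \nu(A_n^1)\,\nu(A_n^2),
\]
and continuity of the probability measure $\nu$ along the decreasing sequences finally yields
\[
\nu(\cC^1 \cap \cC^2) = \lim_n \nu(A_n^1 \cap A_n^2) \geq \lim_n \nu(A_n^1)\,\nu(A_n^2) = \nu(\cC^1)\,\nu(\cC^2).
\]

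The main subtlety lies in applying Harris's inequality: the indicators are \emph{not} monotone in the $\alpha_i$ themselves, but they are monotone in $|\alpha_i|$, and the key observation is that the independent half-normals $|\alpha_i|$ still make Harris applicable. Should one prefer to avoid Harris, the finite dimensional inequality $\nu(A_n^1 \cap A_n^2) \geq \nu(A_n^1)\,\nu(A_n^2)$ also follows as a special case of the Gaussian correlation inequality of Royen for centered Gaussians and symmetric convex bodies, but the diagonal structure here renders the Harris route strictly more elementary. The truncation and measure-theoretic limit are routine given the nesting $A_n^j \supseteq A_{n+1}^j$.
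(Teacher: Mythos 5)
Your proof is correct, and the truncation-and-limit scaffolding matches the paper's exactly. The genuine difference is in the finite-dimensional step. The paper directly invokes Royen's Gaussian correlation inequality (for centered Gaussians and symmetric convex sets), which is a deep result; you instead observe that because the covariance $\Sigma$ and both ellipsoids are \emph{simultaneously diagonal} in the CONS $(e_i)_i$, the coordinates $\alpha_i$ are independent under $\nu$, the two indicators are functions of the independent variables $|\alpha_1|,\ldots,|\alpha_n|$ that are coordinatewise non-increasing (using $a_i, b_i \geq 0$), and hence Harris's inequality for product measures already gives $\nu(A_n^1 \cap A_n^2) \geq \nu(A_n^1)\nu(A_n^2)$. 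This is a strictly more elementary route and is perfectly sufficient for the lemma as stated, since the hypotheses bake in the diagonal structure. What the paper's route would buy you is generality: Royen's theorem handles non-diagonal covariances and arbitrary symmetric convex sets, whereas the Harris argument breaks as soon as the ellipsoid axes are not aligned with the eigenbasis of $\Sigma$, because then the coordinates in which the indicators are monotone are no longer independent. Your passage to the limit is also slightly cleaner than the paper's: you simply note that $A_n^1 \cap A_n^2$ is a decreasing sequence of sets with intersection $\cC^1 \cap \cC^2$ and apply continuity from above, rather than the paper's sandwich argument via set differences.
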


\begin{proof}
Let $\cC^1_n$ an $\cC^2_n$ be the cylinder set that ``truncates'' $\cC^1$ an $\cC^2$ up to index $n$:
$\cC^1_n = \{ \sum_{i=1}^\infty \alpha_i e_i \in \cH \mid \sum_{i=1}^n a_i \alpha_i^2  \leq 1 \}$
and $\cC^2_n = \{\sum_{i=1}^\infty \alpha_i e_i \in \cH \mid 
| \sum_{i=1}^n b_i \alpha_i^2  \leq 1 \}$.
By the Gaussian correlation inequality \cite{royen2014simple,Latala2017}, it holds that
\begin{align*}
\nu(\cC^1_n \cap \cC^2_n) \geq \nu(\cC^1_n) \nu(\cC^2_n).
\end{align*}
Note that we can apply the Gaussian correlation inequality for a finite dimensional Gaussian measure.
Next, we extend this inequality to the infinite dimensional space.
Since $(\cC^1_n)_{n}$ is a monotonically decreasing sequence, i.e., $\cC^1_n \subseteq \cC^1_m$ for $m < n$, and $\cap_{n=1}^\infty \cC^1_n = \cC^1$, 
the continuity of probability measure gives that $\lim_{n \to \infty} \nu(\cC^1_n) = \nu(\cC^1)$.
Similarly, it holds that $\lim_{n \to \infty} \nu(\cC^2_n) = \nu(\cC^2)$.

Since $\cC^2 \subset \cC^1_n$ and $\cC^2 \subset \cC^2_n$, it holds that 
$\nu(\cC^1 \cap \cC^2) \leq \nu(\cC^1_n \cap \cC^2_n)$.
On the other hand, we also have
\begin{align*}
\nu(\cC^1_n \cap \cC^2_n) 
& = \nu((\cC^1 \cup (\cC^1_n \backslash \cC^1)) \cap (\cC^2 \cup (\cC^2_n \backslash \cC^2))) \\
& \leq \nu((\cC^1 \cap \cC^2) \cup (\cC^1_n \backslash \cC^1) \cup  (\cC^2_n \backslash \cC^2)) \\
& \leq \nu(\cC^1 \cap \cC^2) + \nu(\cC^1_n \backslash \cC^1) + \nu(\cC^2_n \backslash \cC^2) \\
& \rightarrow \nu(\cC^1 \cap \cC^2).
\end{align*}
Therefore, we have that 
\begin{align*}
\lim_{n \to \infty}\nu(\cC^1_n \cap \cC^2_n)  = \nu(\cC^1 \cap \cC^2).
\end{align*}
Combining all these arguments, we finally have that 
\begin{align*}
\nu(\cC^1 \cap \cC^2) \geq \nu(\cC^1) \nu(\cC^2).
\end{align*}
\end{proof}


\subsection{Proof of general excess risk bound (Theorem \ref{thm:ExcessRiskConvRate})}
\label{sec:ProofOfExcessRisk}

\begin{proof}

Since $\gamma > 1/2$, $\sigma$ and $\ell(y,\cdot)$ are in $\calC^3(\Real)$ 
where $\ell$ has a bounded partial derivative on a bounded domain,  
we can easily verify that the empirical risk satisfies Assumption \ref{ass:IGLDConvCond}
by noticing the clipping operation in the model.

For $0 < \theta < 1$, let $\cHKtilt := \calH_{K^{\theta(\gamma+1)}}$.
It is known that if the natural inclusion $I_{\tilde{K},\tilde{K}^\theta}: \cHKtil \to \cHKtilt$ is Hilbert-Schmidt, then 
the sample path of $\nutil$ is included in $\cHKtilt$ probability 1 (Theorem 5.2 of \cite{steinwart2019convergence}).
In our case, since $\mu_k \lesssim 1/k^2$, the eigenvalues $(\mu_k(\tilde{K}))_{k=1}^\infty$ of $\tilde{K}$ satisfies $\mu_k(\tilde{K}) \lesssim 1/k^{2(\gamma+1)}$.
Theorem 5.2 of \cite{steinwart2019convergence} also states that $I_{\tilde{K},\tilde{K}^\theta}$ is Hilbert-Schmidt if and only if $\sum_{k=0}^\infty \mu_k(\tilde{K})^{1-\theta} < \infty$.
Therefore, by setting $\alphatil := 1/\{2(\gamma+1)\}$, $\theta < 1 - \alphatil$ is sufficient for this property.
From now on, we assume that $\theta < 1 - \alphatil$.
For notational simplicity, let $\lambdabeta := \lambda \beta$.

By definition, we have
\begin{align*}
\EE_{W\sim \nutil}[\|T_K^{\gamma/2} W\|_{\cHKtilt}^2 ]
= \mathrm{Tr}[ T_K^{\gamma/2} (\lambdabeta^{-1} T_K) T_K^{\gamma/2} T_K^{-\theta(\gamma+1)}]
= \lambdabeta^{-1} \mathrm{Tr}[ T_K^{(\gamma + 1) (1-\theta)}]
\end{align*}
Note that the assumption $\theta < 1-\alphatil$ ensures the right hand side is finite.
Therefore, we obtain that, for $\bar{R}_\theta > 0$, 
\begin{align*}
\nutil(\{h \in \cH \mid \|h\|_{\cHKtilt} \geq \lambdabeta^{-1/2}  \bar{R}_\theta \}) & \leq 
\frac{\EE_{W\sim \nutil}[\|T_K^{\gamma/2} W\|_{\cHKtilt}^2 ] }{\lambdabeta^{-1}\bar{R}_\theta^2}
\leq \frac{\lambdabeta^{-1}\mathrm{Tr}[ T_K^{(\gamma + 1) (1-\theta)}]}{\lambdabeta^{-1}\bar{R}_\theta^2} \\
& = \frac{\mathrm{Tr}[ T_K^{(\gamma + 1) (1-\theta)}]}{\bar{R}_\theta^2}.
\end{align*}
Hence, by setting $\bar{R}_\theta = \sqrt{2 \mathrm{Tr}[ T_K^{(\gamma + 1) (1-\theta)}]}$, 
we can guarantee that $\nutil(\{h \in \cH \mid \|h\|_{\cHKtilt} \leq \lambdabeta^{-1/2}  \bar{R}_\theta \}) \geq 1/2$.

Let $B_\epsilon = (\epsilon \calB_\calH) \cap (\lambdabeta^{-1/2} \bar{R}_\theta \calB_{\cHKtilt})$.
We define 
\begin{align*}
& 
\phi_{\beta,\lambda}^{(0)}(\epsilon) := - \log \nutil(\{W \in \cH: W \in B_\epsilon \}).
\end{align*}


For any $\delta > 0$, pick up $\hstar \in \cHKtil$ that satisfies
\begin{align*}
&  \lambdabeta \|\hstar\|^2_{\cHKtil} \leq  (1+\delta) \inf_{h \in \cHKtil: \calL(h) - \calL(\fstar) \leq \epsilon^2} \lambdabeta \|h\|^2_{\cHKtil} \\
& \calL(f_\hstar) - \calL(\fstar) \leq \epsilon^2.
\end{align*}
Then, by Borel's inequality, it holds that 
\begin{align*}
-\log \nutil(\{h \in \cH :  \|h - \hstar\|_{\cH} \leq \epsilon \}) 
\leq (1+\delta) \phi_{\beta,\lambda}(\epsilon).
\end{align*}

By the smoothness of the expected loss function $\bar{\ell}_x$, it holds that,
for any $f,g \in \LPiPx$, 
\begin{align*}
& \calL(f) + \langle g - f, \nabla_f \calL(f) \rangle_{\LPi} + \frac{L}{2} 
\| g - f\|_{\LPi}^2 \\
= &
\EE_{X}\left[\bar{\ell}_X(f(X)) + (g(X) - f(X))\bar{\ell}'_X(f(X)) +  
\frac{L}{2} 
( g(X) - f(X))^2 \right] \\
\geq & 
\EE_{X}\left[\bar{\ell}_X(g(X))  \right] =\calL(g),
\end{align*}
where $\nabla_f \calL$ is the Fr{\'e}chet  derivative in $\LPiPx$ (note that this inequality holds even though $\bar{\ell}_x(\cdot)$ is not a convex function).
By substituting $g = \fstar$ and $f = \fstar + \frac{1}{L} \nabla_f\calL(f_\hstar)$, we obtain 
\begin{align*}
& \calL(\fstar) + \frac{1}{2L} \|\nabla_f \calL(f_{\hstar})\|_{\LPi}^2 \leq \calL(f_\hstar) \\
\Rightarrow~~ &
\|\nabla_f \calL(f_{\hstar})\|_{\LPi}^2 \leq 2L(\calL(f_\hstar) - \calL(\fstar)) \leq 2L \epsilon^2.
\end{align*}
%
%
%
Therefore, for any $h \in \cH$ such that $\|h - \hstar\|_{\cH} \leq \epsilon$, it holds that 
\begin{align*}
\calL(h) 
& \leq \calL(f_\hstar) + \langle f_h - f_\hstar, \nabla \calL(f_\hstar)\rangle_{\LPi} + \frac{L}{2} \|f_h - f_\hstar\|_{\LPi}^2 \\
& \leq \calL(f_\hstar) + \frac{1}{2}\| f_h - f_\hstar \|_{\LPi}^2 + \frac{1}{2}\| \nabla \calL(f_\hstar)\|^2_{\LPi} 
+ \frac{L}{2} \|f_h - f_\hstar\|_{\infty}^2 \\
& \leq \calL(f_\hstar) + \frac{1}{2}\| f_h - f_\hstar \|_{\infty}^2 + \frac{1}{2} 2L\epsilon^2
+ \frac{L}{2} \|f_h - f_\hstar\|_{\infty}^2 \\
& \leq \epsilon^2 + 
\frac{1+L}{2} (1+RD)^2 \|h - \hstar\|_{\cH}^2 + L \epsilon^2  \\
& \leq \left(1 + \frac{(1+L)(1+RD)^2}{2} + L \right) \epsilon^2 =: \CLRD \epsilon^2.
\end{align*}
This yields that 
\begin{align*}
& -\log \nutil(\{h \in \cH :  \calL(h) - \calL(\fstar) \leq \CLRD \epsilon^2\}) \\
& \leq 
-\log \nutil(\{h \in \cH :  \calL(h)   - \calL(\fstar) \leq \CLRD \epsilon^2, 
\|h - \hstar\|_{\cH} \leq \epsilon \})  \\
& \leq (1+\delta) \phi_{\beta,\lambda}(\epsilon).
\end{align*}
Since $\delta$ is arbitrary, we obtain that 
\begin{align}\label{eq:SmallBallPhiLowerBound}
& -\log \nutil(\{h \in \cH :  \calL(h)  - \calL(\fstar)  \leq \CLRD \epsilon^2\}) 
\leq \phi_{\beta,\lambda}(\epsilon).
\end{align}


By the Gaussian correlation inequality (Lemma \ref{eq:GaussianCorrelationInfinite}), we have that 
\begin{align*}
\phi_{\beta,\lambda}^{(0)}(\epsilon)&  = - \log \nutil(\{W \in \cH \mid W \in B_\epsilon \}) \\
& \leq 
- \log[ \nutil(\{h \in \cH \mid \|h\|_{\cH} \leq \epsilon\}) \times \nutil(\{h \in \cH \mid \|h\|_{\cHKtilt} \leq \lambdabeta^{-1/2}  \bar{R}_\theta \})] \\ 
& \leq 
- \log[ \nutil(\{h \in \cH \mid \|h\|_{\cH} \leq \epsilon\}) \times 1/2] \\ 
& = - \log \nutil(\{h \in \cH \mid \|h\|_{\cH} \leq \epsilon\}) + \log(2).
\end{align*}
Therefore, we can see that 
\begin{align*}
\inf_{h \in \cHKtil: \calL(h) - \calL(\fstar) \leq \epsilon^2/2} \lambdabeta \|h\|^2_{\cHKtil} + \phi_{\beta,\lambda}^{(0)}(\epsilon)
\leq 
 \phi_{\beta,\lambda}(\epsilon).
\end{align*}
Here we define $\epsilonstar$ as
\begin{align*}
\epsilonstar := \max\{\inf\{\epsilon > 0 :  \phi_{\beta,\lambda}(\epsilon) \leq \beta \epsilon^2\},n^{-\frac{1}{2(2-s)}}\}.
\end{align*}
Note that, since $\phi_{\beta,\lambda}$ is monotonically non-increasing, $\epsilonstar$ satisfies
\begin{align}\label{eq:epsilonstarPhiBetaLamCond}
\phi_{\beta,\lambda}(\epsilonstar) \leq \beta \epsilonstar^2.
\end{align}

Let $r > 1$, and $M_r := - 2 \Phi^{-1}(e^{- \beta \epsilonstar^2 r})$ where $\Phi$ is the cumulative distribution function of the standard normal distribution.
Since $\Phi^{-1}(y) \geq - \sqrt{5/2\log(1/y)}$ for every $y \in (0,1/2)$
and \Eqref{eq:epsilonstarPhiBetaLamCond} implies $\log(2) \leq \beta \epsilonstar^2$ yielding $e^{-\beta \epsilonstar^2 r}< e^{-\beta \epsilonstar^2} \leq e^{-\log(2)} = 1/2$, $M_r$ is bounded by
\begin{align}\label{eq:MrepsBound}
- M_r/2 \geq - \sqrt{\frac{5}{2}\log(e^{\beta \epsilonstar^2 r})} = -  \sqrt{\frac{5}{2} \beta \epsilonstar^2 r} 
\Rightarrow 
M_r \leq \sqrt{10 \beta \epsilonstar^2 r}.
\end{align}
Let 
\begin{align*}
\calF_r:= B_\epsilonstar + M_r \lambdabeta^{-1/2} \calB_{\cHKtil}.
\end{align*}
By Borell's inequality (Theorem 3.1 of  \cite{borell1975brunn}), 
the prior probability mass of $\calF_r$ is lower bounded by 
\begin{align*}
\nutil(\calF_r) \geq \Phi(M_r + \alpha_r)
\end{align*}
where $\alpha_r \in \Real$ is determined by 
\begin{align*}
\alpha_r = \Phi^{-1}(\nutil(B_\epsilonstar)) = \Phi^{-1}(e^{-\phi_{\beta,\lambda}^{(0)}(\epsilonstar)}).
\end{align*}
Since $\phi_{\beta,\lambda}^{(0)}(\epsilonstar) \leq \phi_{\beta,\lambda}(\epsilonstar) + \log(2)
\leq \beta \epsilonstar^2 \leq \beta \epsilonstar^2 r$ (\Eqref{eq:epsilonstarPhiBetaLamCond}), 
we have $\Phi(\alpha_r) \geq e^{-\beta \epsilonstar^2r }$ by the definition of $\alpha_r$, which implies 
\begin{align*}
\alpha_r \geq \Phi^{-1}(e^{-\beta \epsilonstar^2 r}) = - \frac{1}{2} M_r,
\end{align*}
where the last equality is given by the definition of $M_r$.
Therefore, 
\begin{align}\label{eq:nuFrlowerbound}
\nutil(\calF_r) \geq \Phi(M_r + \alpha_r) \geq \Phi(M_r /2) \geq 1 - \exp(- \beta \epsilonstar^2 r).
\end{align}

By the proof of Theorem 2.1 in \cite{AS:Vaart&Zanten:2008}, we obtain that the metric entropy of $\calF_r$ is bounded by
\footnote{For a metric space $\tilde{\calF}$ equipped with a metric $\tilde{d}$, the $\epsilon$-covering number $\calN(\tilde{\calF},\tilde{d},\epsilon)$
is defined as the minimum number of balls with radius $\epsilon$ (measured by the metric $\tilde{d}$) to cover the metric space $\tilde{\calF}$. 
}
\begin{align*}
\log \calN(\calF_r,\|\cdot\|_{\cH},\epsilonstar) 
& \leq \frac{1}{2} M_r^2 + \phi_{\beta,\lambda}^{(0)}(\epsilonstar),
\end{align*}
and, more strongly,
there exist $h_1,\dots,h_{N_\epsilonstar} \in \calF_r$ for $N_{\epsilonstar} := \calN(\calF_r, \epsilonstar, \|\cdot\|_{\cH})$
such that 
\begin{align*}
\calF_r \subset B_\epsilonstar + \{h_1,\dots,h_{N_\epsilonstar}\}.
\end{align*}

This indicates that, even for smaller $\epsilon' \leq \epsilonstar$, it holds that 
\begin{align*}
\log \calN(\calF_r, \|\cdot\|_{\cH},\epsilon') 
\leq N_\epsilonstar + \log \calN(B_\epsilonstar,\|\cdot\|_{\cH},\epsilon').
\end{align*}
Then, if we let $\tilde{\phi}(\epsilon') = \log \calN((\lambdabeta^{-1/2}\bar{R}_\theta) \calB_{\cHKtilt}, \|\cdot\|_{\cH},\epsilon')$, then 
for $B_\epsilonstar \subset (\lambdabeta^{-1/2}\bar{R}_\theta) \calB_{\cHKtilt}$ by its definition,
\Eqref{eq:epsilonstarPhiBetaLamCond} and \Eqref{eq:MrepsBound} give
\begin{align}\label{FrCoveringNumberBound}
\log \calN(\calF_r, \|\cdot\|_{\cH},\epsilon')  \leq N_\epsilonstar + \tilde{\phi}(\epsilon') \leq 
\frac{1}{2} M_r^2 + \phi_{\beta,\lambda}^{(0)}(\epsilonstar)+ \tilde{\phi}(\epsilon')
\leq (5r + 1) \beta \epsilonstar^2 + \tilde{\phi}(\epsilon').
\end{align}
Note that if $\epsilon' > \epsilonstar$, then by using the fact $N_{\epsilon'} \leq N_\epsilonstar$, we can see that this inequality still holds: 
$\log \calN(\calF_r, \epsilon', \|\cdot\|_{\cH})  = N_{\epsilon'} \leq N_\epsilonstar \leq N_\epsilonstar + \tilde{\phi}(\epsilon')$.
The covering number of $\calB_{\cHKtilt}$ can be evaluated using the decay rate of the spectrum $(\mu_k(\tilde{K}^{\theta}))_{k}$ \cite{COLT:Steinwart+etal:2009}.
Indeed,  
 $\mu_k(\tilde{K}^{\theta}) \lesssim k^{- \frac{\theta}{\alphatil}}$ implies  
$\tilde{\phi}(\epsilon') \lesssim (\epsilon'/\lambdabeta^{1/2})^{-2\alphatil /\theta}$ \cite[Theorem 15]{COLT:Steinwart+etal:2009}.
Moreover, the small ball probability $\phi_{\beta,\lambda}^{(0)}(\epsilon)$ can be evaluated using the covering number.
First, notice that $\phi_{\beta,\lambda}^{(0)}(\epsilon)  = - \log \nutil(\{W \in \cH \mid W \in B_\epsilon \}) \leq  
- \log \nutil(\{W \in \cH \mid  \|W \|_{\cH} \leq \epsilon \}) = :\varphi(\epsilon)$,
and then \cite{ghosal_van_der_vaart:Book:2017} showed that 
\begin{align*}
\varphi(2 \epsilon) \lesssim \log \calN\left(\lambdabeta^{-1/2} \calB_{\cHKtil}, \|\cdot\|_{\cH},\frac{\epsilon}{\sqrt{2\varphi(\epsilon)}}\right)
\lesssim
\varphi(\epsilon).
\end{align*}
Here, since the entropy number in the middle is evaluated as $\log\calN\left(\lambdabeta^{-1/2} \calB_{\cHKtil}, \|\cdot\|_{\cH},\epsilon\right) \lesssim (\epsilon/\lambdabeta^{1/2})^{-2\alphatil}$, we obtain 
\begin{align}\label{eq:SmallBallConcentProb}
\phi_{\beta,\lambda}^{(0)}(\epsilon) \leq \varphi(\epsilon) \lesssim \left( \frac{\epsilon}{\lambdabeta^{1/2}} \right)^{- \frac{2\alphatil}{1-\alphatil}}.
\end{align}


In this setting, we will show that, for any $r > 1$, there exists an event $\calE_r$ with respect to data generation $D_n$ and 
exists $\ustar > 0$ such that
\begin{align*}
\mathrm{A}:~& P (\calE_r^c) \leq 
 2 e^{- c' \min\{\beta \epsilonstar^2, n \epsilonstar^{2(2-s)}\}r}~~\text{for a constant $c' > 0$}, \\
\mathrm{B}:~ & \calLhat(W) - \calLhat(\fstar) \geq 
\frac{1}{2} [ (\calL(h) - \calL(\fstar)) -  \ustar r]~~~~(\forall h \in \calF_r) \\
&~~~~~~~~~\text{under the event $\calE_r$}, \\
\mathrm{C}:~ & \EE[\pi_\infty(\calF_r^c) \boldone_{\calE}] \leq 
2 \exp\left[ - \tfrac{1}{2}(r - 2)\beta\epsilonstar^2\right], \\
\mathrm{D}:~ & \EE[\pi_\infty(\{W \in \calF_r : \calL(W) - \calL(\fstar) \geq 3 r \ustar\}) \boldone_{\calE_r}] \leq 
\exp\left[ - \tfrac{1}{2} (r- 2) \beta \epsilonstar^2 \right].
\end{align*}


From now on, we will define $\ustar$ and $\calE_r$ and prove the conditions $\mathrm{A,B,C,D}$ one by one.

{\it \bf Step 1: Definitions of $\ustar$ and $\calE_r$, and proof of $\mathrm{A}$ and $\mathrm{B}$.}

For notational simplicity, we write $\ell(f,Z)$ to indicate $\ell(Y,f(X))$ for $Z=(X,Y)$.
By Talangrand's concentration inequality \cite{Talagrand2,BousquetBenett}, we have 
\begin{align*}
& P\Bigg( \sup_{h \in \calF_r} \frac{|\calL(h) - \calL(\fstar) - (\calLhat(h) - \calLhat(\fstar))|}{\calL(h) - \calL(\fstar) + u} 
\notag \\
& ~~~~~~~~~~~~~\geq 
2 \EE\left[\sup_{h \in \calF_r} \frac{|\calL(h) - \calL(\fstar) - (\calLhat(h) - \calLhat(\fstar))|}{\calL(h) - \calL(\fstar) + u}\right]
+ \sqrt{\frac{2 t}{n}V } + \frac{2 t U}{n} \Bigg) \\
& \leq \exp(-t),
\end{align*}
for any $t \geq 1$, where 
\begin{align*}
V & = \sup_{h \in \calF_r} \frac{\EE[(\ell(h,Z) - \ell(\fstar,Z) - \EE[\ell(h,Z) - \ell(\fstar,Z)])^2]}{(\calL(h) - \calL(\fstar) + u)^2}, \\
U & = \sup_{h \in \calF_r} \frac{\|\ell(h,\cdot) - \ell(\fstar,\cdot) - \EE[\ell(h,Z) - \ell(\fstar,Z)]\|_{\infty} }{\calL(h) - \calL(\fstar) + u}.
\end{align*}
By the Bernstein condition, it holds that 
\begin{align*}
\Var[ \ell(f_h,Z) - \ell(\fstar,Z)]  \leq \EE[(\ell(f_h,Z) - \ell(\fstar,Z))^2] \leq C_B(\calL(h) - \calL(\fstar))^{\betas},
\end{align*}
which gives 
\begin{align*}
V \leq C_B \sup_{h \in \calF_r} \frac{(\calL(h) - \calL(\fstar))^s}{(\calL(h) - \calL(\fstar) + u)^2}
\leq \frac{C_B}{u^{2-s}}.
\end{align*}
By the boundedness assumption of the loss function, we can see that 
\begin{align*}
U \leq \frac{\Rbar}{u}.
\end{align*}
Hence, we have that 
\begin{align}
& P\Bigg( \sup_{h \in \calF_r} \frac{|\calL(h) - \calL(\fstar) - (\calLhat(h) - \calLhat(\fstar))|}{\calL(h) - \calL(\fstar) + u} 
  \notag \\
&~~~~~~~~~~~~~\geq 2\EE\left[\sup_{h \in \calF_r} \frac{|\calL(h) - \calL(\fstar) - (\calLhat(h) - \calLhat(\fstar))|}{\calL(h) - \calL(\fstar) + u}\right]
+ \sqrt{\frac{2 C_B }{nu^{2-s}} t} + \frac{2 \Rbar}{n u}t \Bigg) \notag \\
& \leq \exp(-t),
\label{eq:TalagrandConc}
\end{align}
for any $t \geq 1$.

Hereafter, we bound the expectation of the supremum of the ratio type empirical process: $\EE\left[\sup_{h \in \calF_r} \frac{|\calL(h) - \calL(\fstar) - (\calLhat(h) - \calLhat(\fstar))|}{\calL(h) - \calL(\fstar) + u}\right]$.
Let the empirical $L_2$-norm be $\|h\|_n := \sqrt{\frac{1}{n}\sum_{i=1}^n h(z_i)^2}$. By the usual Rademacher complexity and covering number argument (Lemma 11.4 of \cite{boucheron2013concentration}, Theorem 5.22 of \cite{wainwright2019high} and Lemma A.5 of \cite{bartlett2017spectrally:arXiv} for example), the non-ratio-type empirical process can be bounded as 
\begin{align*}
&
\EE\left[\sup_{h \in \calF_r: \calL(h) - \calL(\fstar) \leq u} |\calL(h) - \calL(\fstar) - (\calLhat(h) - \calLhat(\fstar))|\right]
\\
& \leq 
2 \EE\left[\sup_{h \in \calF_r: \calL(h) - \calL(\fstar) \leq u} \Big| \frac{1}{n}\sum_{i=1}^n \epsilon_i ( \ell(f_h,z_i) - \ell(\fstar,z_i) - 
(\calL(h) - \calL(\fstar)))\Big|
\right] \\
& \leq 
C\EE\left[ \inf_{a > 0}\left\{ a + \int_a^{\hat{r}(u)} \sqrt{\frac{\log \calN(\{\ell(f_h,\cdot) - \ell(\fstar,\cdot) \mid h \in \calF_r,~\calL(h) - \calL(\fstar) \leq u \},\|\cdot\|_n,\epsilon')}{n}} \dd \epsilon' \right\}\right],
\end{align*}
where $\hat{r}(u) := \sup\{ \|\ell(f_h,\cdot) - \ell(\fstar,\cdot)\|_n \mid h \in \calF_r,~\calL(h) - \calL(\fstar) \leq u \}$
and $C$ is a universal constant.
The Dudley integral in the right hand side can be bounded by 
\begin{align*}
& \int_a^{\hat{r}(u)} \sqrt{\frac{\log \calN(\{\ell(f_h,\cdot) - \ell(\fstar,\cdot) \mid h \in \calF_r,~\calL(h) - \calL(\fstar) \leq u \},\|\cdot\|_n,\epsilon')}{n}} \dd \epsilon' \\
\stackrel{(1)}{\leq}
& \int_a^{\hat{r}(u)} \sqrt{\frac{\log \calN(\{ f_h \mid h \in \calF_r \},\|\cdot\|_\infty,\epsilon'/B)}{n}} \dd \epsilon' \\
\stackrel{(2)}{\leq}
& 
\int_a^{\hat{r}(u)} \sqrt{\frac{\log \calN(\calF_r ,\|\cdot\|_{\cH},\epsilon'/(B(1+RD)))}{n}} \dd \epsilon'  \\
\stackrel{(3)}{\leq}
&
\int_a^{\hat{r}(u)} \sqrt{\frac{N_\epsilonstar + \tilde{\phi}(\epsilon'/(B(1+RD)))}{n}} \dd \epsilon', 
\end{align*}
where we used the bounded gradient condition on the loss function to show (1), used Lemma \ref{lem:LipshitzInfty} to show (2),  and used  \Eqref{FrCoveringNumberBound} to show (3).
If we let $a = 1/n$, then we have 
\begin{align*}
& \EE\left[\sup_{h \in \calF_r: \calL(h) - \calL(\fstar) \leq u} |\calL(h) - \calL(\fstar) - (\calLhat(h) - \calLhat(\fstar))|\right] 
\\ 
\leq 
& 
C\EE\left[\frac{1}{n} + \int_{1/n}^{\hat{r}(u)} \sqrt{\frac{N_\epsilonstar + \tilde{\phi}(\epsilon'/(B(1+RD)))}{n}} \dd \epsilon' \right]
=: \psi_{r,\epsilonstar}(u).
\end{align*}
Here, we assume that there exists an upper bound $\bar{\psi}_{r,\epsilonstar}(u)$ of $\psi_{r,\epsilonstar}(u)$ that satisfies
\begin{subequations}
\label{eq:barpsiConditions}
\begin{align}
& \bar{\psi}_{r,\epsilonstar}(4 u) \leq 2 \bar{\psi}_{r,\epsilonstar}(u)~~~(u > 0), \\
& \frac{\bar{\psi}_{r,\epsilonstar}(u r)}{u r} \leq \frac{\bar{\psi}_{1,\epsilonstar}(u)}{u}~~~(u >0,~r \geq 1).
\end{align} 
\end{subequations}
We will show these conditions in Step 5.
Then, the so called {\it peeling device} gives 
\begin{align*}
\EE\left[\sup_{h \in \calF_r} \frac{|\calL(h) - \calL(\fstar) - (\calLhat(h) - \calLhat(\fstar))|}{\calL(h) - \calL(\fstar) + u}\right]
\leq
\frac{4 \bar{\psi}_{r,\epsilonstar}(u)}{u}.
\end{align*}
(Theorem 7.7 and Eq.~(7.17) of \cite{Book:Steinwart:2008}).
Therefore, \Eqref{eq:TalagrandConc} can yields that  
\begin{align*}
& P\left( \sup_{h \in \calF_r} \frac{|\calL(h) - \calL(\fstar) - (\calLhat(h) - \calLhat(\fstar))|}{\calL(h) - \calL(\fstar) + u} 
\geq 
8\frac{\bar{\psi}_{r,\epsilonstar}(u)}{u}
+ \sqrt{\frac{2C_B }{nu^{2-s}} t} + \frac{2 \Rbar}{n u}t \right) 
\leq \exp(-t).
\end{align*}
Here, for $t_1 = \beta \epsilonstar^2$, let $\ustar$ be any real number satisfying
\begin{align*}
\ustar  \geq 
\max\left\{
\epsilonstar^2
,
\inf\left\{ u > 0 : 
8\frac{\bar{\psi}_{1,\epsilonstar}(u)}{u}
+ \sqrt{\frac{2 C_B }{n u^{2-s}} t_1} + \frac{2 \Rbar}{n u}t_1 \leq \frac{1}{2}
\right\}
\right\}.
\end{align*}
For more general $t = t_1 r =\beta \epsilonstar^2 r $, 
since $\frac{\bar{\psi}_{r,\epsilonstar}(\ustar r)}{\ustar r} \leq \frac{\bar{\psi}_{1,\epsilonstar}(\ustar)}{\ustar}$,
combining with the fact that $\sqrt{\frac{2 C_B }{n (\ustar r)^{2-s}} t_1r} \leq \sqrt{\frac{2 C_B }{n (\ustar )^{2-s}} t_1}$
and $\frac{2 \Rbar}{n \ustar r}t_1r = \frac{2 \Rbar}{n \ustar }t_1$, 
it also holds that 
$$
8 \frac{\bar{\psi}_{r,\epsilonstar}(\ustar r)}{\ustar r}
+ \sqrt{\frac{2C_B }{n(\ustar r)^{2-s}} t_1 r} + \frac{2 \Rbar}{n (\ustar r)}t_1 r \leq \frac{1}{2},
$$
for $r \geq 1$.
Therefore, the following inequality holds:
\begin{align*}
& \calLhat(h) - \calLhat(\fstar) \geq \frac{1}{2} [ (\calL(h) - \calL(\fstar)) -  \ustar r], 
\end{align*}
uniformly over all $h \in \calF_r$ with probability $1 - e^{-\beta \epsilonstar^2 r }$.
We denote this event by $\calE_1$.

\Eqref{eq:SmallBallPhiLowerBound} gives that 
\begin{align*}
 -\log \nutil(\{h \in \cH :  \calL(h) - \calL(\fstar) \leq \CLRD \epsilonstar^2\}) 
\leq \phi_{\beta,\lambda}(\epsilonstar).
\end{align*}
Let the conditional probability measure of $\nutil$ conditioned on the set $A_\epsilonstar := \{h \in \cH :  \calL(h) - \calL(\fstar) \leq \CLRD \epsilonstar^2\}$ be 
\begin{align*}
\nutil(B|A_\epsilonstar) := \frac{\nutil(B \cap A_\epsilonstar) }{\nutil(A_\epsilonstar)},
\end{align*}
for a measurable set $B \subset \cH$.
Let $\bar{\ell}(Z) := \int \ell(h,Z) \nutil(\dd h | A_\epsilonstar)$. 
Then, we have that 
\begin{align*}
 \int \exp\left( - \beta (\calLhat(h) - \calLhat(\fstar)) \right) \dd \nutil(h | A_\epsilonstar)  &\geq \exp\left( - \int \beta (\calLhat(h) - \calLhat(\fstar)) \dd \nutil(h | A_\epsilonstar) \right) \\
&
 = \exp\left[ -  \beta  \left(\frac{1}{n} \sum_{i=1}^n \bar{\ell}(z_i) - \EE[\bar{\ell}(Z)] \right) \right].
\end{align*}
Now, by the Bernstein's inequality,
\begin{align*}
P\left( \frac{1}{n} \sum_{i=1}^n \bar{\ell}(z_i) - \EE[\bar{\ell}(Z)] \geq  \sqrt{\frac{2C_B (\CLRD \epsilonstar^2)^{s} t}{n}} + \frac{\Rbar t}{n}\right) \leq e^{-t},
\end{align*}
for $t \geq 0$. 
Here, let $t = \frac{1}{8}\min\{\frac{1}{2C_B \CLRD^s },\frac{1}{\Rbar}\} n \min\{ \epsilonstar^{2(2-s)}, \epsilonstar^2\} r$, then it holds that
\begin{align*}
P\left( \frac{1}{n} \sum_{i=1}^n \bar{\ell}(z_i) - \EE[\bar{\ell}(Z)] \geq  \frac{1}{2}\epsilonstar^2 r \right)
 \leq e^{-\frac{1}{8}\min\{\frac{1}{2C_B \CLRD^s },\frac{1}{\Rbar}\} n \min\{\epsilonstar^{2(2-s)},\epsilonstar^2\} r}.
\end{align*}
%
Therefore, this and the definition of $\nutil(\cdot |A_\epsilonstar)$ give that 
\begin{align}
 \int \exp\left( - \beta (\calLhat(h) - \calLhat(\fstar)) \right) \dd \nutil(h ) 
&
\geq  \exp\left( - \tfrac{1}{2}\beta \epsilonstar^2 r 
\right) \nutil(A_\epsilonstar)  \notag \\
&
\geq  \exp\left( -\tfrac{1}{2}\beta\epsilonstar^2 r  
- \beta \epsilonstar^2\right)   ~~~(\text{$\because$ Eqs. \eqref{eq:SmallBallPhiLowerBound} and  \eqref{eq:epsilonstarPhiBetaLamCond}}) \notag \\
&
\geq  
\exp\left( -(\tfrac{r}{2} + 1)\beta\epsilonstar^2   
\right).
\label{eq:DenominatorLowerBound}
\end{align}
with probability $1 - \exp[-\frac{1}{8}\min\{\frac{1}{2C_B \CLRD^s },\frac{1}{\Rbar}\} n 
\min\{\epsilonstar^{2(2-s)},\epsilonstar^2 \}r]$. 
We define this event as $\calE_2$.

Combining $\calE_1$ and $\calE_2$, we define $\calE_r = \calE_1 \cap \calE_2$, then $P(\calE_r) \geq 1 - 
e^{-\beta \epsilonstar^2 r } - e^{-\frac{1}{8}\min\{\frac{1}{2C_B \CLRD^s },\frac{1}{\Rbar}\} n \min\{\epsilonstar^{2(2-s)}, \epsilonstar^2\} r}
\geq 1- 2 e^{- c' \min\{\beta \epsilonstar^2, n \epsilonstar^{2(2-s)}\}r}$ for a constant $c' > 0$,
where we used $\beta \leq n$.


{\it \bf Step 2: Proof of $\mathrm{C}$.}

Next, we evaluate the condition $\mathrm{C}$.
\Eqref{eq:DenominatorLowerBound} gives that, on the event $\calE_r$, the Bayes posterior probability of $\calF_r^c$ is upper bounded by   
\begin{align*}
\pi_\infty(\calF_r^c) & = 
\frac{ \int_{h \in \calF_r^c} \exp\left( - \beta (\calLhat(h) - \calLhat(\fstar)) \right) \dd \nutil(h ) 
}{
 \int \exp\left( - \beta (\calLhat(h) - \calLhat(\fstar)) \right) \dd \nutil(h ) 
} \\
& \leq 
\exp\left( (\tfrac{r}{2} + 1)\beta\epsilonstar^2\right) \int_{h \in \calF_r^c} \exp\left( - \beta (\calLhat(h) - \calLhat(\fstar)) \right) \dd \nutil(h).
\end{align*}
Therefore, it holds that 
\begin{align}
\EE[\pi_\infty(\calF_r^c) \boldone_{\calE_r}] & 
\leq 
\EE\left[\boldone_{\calE_r} 
\exp\left( (\tfrac{r}{2} + 1)\beta\epsilonstar^2\right)
\int_{h \in \calF_r^c} \exp(-\beta(\calLhat(h) - \calLhat(\fstar))) \dd \nutil(h) \right] \notag \\
& 
\leq 
\exp\left( (\tfrac{r}{2} + 1)\beta\epsilonstar^2\right)
\int_{h \in \calF_r^c}
\EE\left[ \exp\left(-n \frac{\beta}{n}(\calLhat(h) - \calLhat(\fstar))\right)\right] \dd \nutil(h)  \notag\\
& = 
\exp\left( (\tfrac{r}{2} + 1)\beta\epsilonstar^2\right)
\int_{h \in \calF_r^c}
\prod_{i=1}^n \EE_{Z_i}\left[ \exp\left(-\frac{\beta}{n} (\ell(h,Z_i) - \ell(\fstar,Z_i))\right)\right] \dd \nutil(h)  \notag\\
& \leq
\exp\left( (\tfrac{r}{2} + 1)\beta\epsilonstar^2\right)
  \nutil( \calF_r^c) ~~~(\because \text{predictor condition of Assumption \ref{ass:BernsteinLightTail}}) \notag\\
& \leq 
2 
\exp\left( (\tfrac{r}{2} + 1)\beta\epsilonstar^2 - \beta \epsilonstar^2 r\right) 
~~~(\because \text{\Eqref{eq:nuFrlowerbound}}) \notag \\
& =
2 
\exp\left( - \tfrac{1}{2}(r - 2)\beta\epsilonstar^2\right).
\notag 
\end{align}

{\it \bf Step 3: Proof of $\mathrm{D}$.}

Next, we prove the condition $\mathrm{D}$.
Similarly to $\mathrm{C}$, we have that 
\begin{align*}
& \EE[\pi_\infty(\{W \in \calF_r : \calL(W) - \calL(\fstar) \geq 3 r \ustar \}) \boldone_{\calE_r}] \\
& = 
\EE\left[\frac{\int_{\calL(h) - \calL(\fstar) \geq 3 r \ustar} \exp(-\beta(\calLhat(h) - \calLhat(\fstar))) \dd \nutil(h)}{\int \exp(-\beta(\calLhat(h) - \calLhat(\fstar))) \dd \nutil(h)} \boldone_{\calE_r}\right] \\
& \leq
\EE\left[ \boldone_{\calE_r} 
\exp\left( (\tfrac{r}{2} + 1)\beta\epsilonstar^2\right)
\int_{\calL(h) - \calL(\fstar) \geq 3 r \ustar} \exp(-\beta(\calLhat(h) - \calLhat(\fstar))) \dd \nutil(h)
\right] \\
& 
\leq 
\EE\left[ \boldone_{\calE_r} 
\exp\left( (\tfrac{r}{2} + 1)\beta\epsilonstar^2\right)
\int_{\calLhat(h) - \calLhat(\fstar) \geq r \ustar} \exp(-\beta(\calLhat(h) - \calLhat(\fstar))) 
\dd \nutil(h)
\right] \\
& ~~~~~~~(\because \text{condition $\mathrm{B}$ is satisfied on $\calE_r$})\\
& 
\leq 
\exp\left( (\tfrac{r}{2} + 1)\beta\epsilonstar^2 -r \beta \ustar \right)
\\
&
\leq 
\exp\left[ - \tfrac{1}{2} (r- 2) \beta \epsilonstar^2 \right].
\end{align*}

{\it \bf Step 4: Integrating all bounds of $\mathrm{A,B,C,D}$.}

Finally, we integrate all bounds to obtain an excess risk bound.
\begin{align}
& \EE_{D^n}\left[\int \calL(W) - \calL(\fstar) \dd \pi_\infty(W) \right] \notag \\
= & 
\EE_{D^n}\left[\int_0^\infty  \pi_\infty(\{ W \in \cH:  \calL(W) - \calL(\fstar) > t\})  \dd t\right] \notag\\
= & 
\int_0^\infty  \EE_{D^n}\left[ \pi_\infty(\{ W \in \cH:  \calL(W) - \calL(\fstar) > t\}) \right] \dd t~~~(\text{by Fubuni's theorem}) \notag\\
= & 
3 \ustar
+ 
\int_1^\infty  3 \ustar \EE_{D^n}\left[ \pi_\infty(\{ W \in \cH:  \calL(W) - \calL(\fstar) > 3 r \ustar \}) \right] \dd r \notag\\
\leq 
& 
3 \ustar
+ 
3 \ustar  \int_1^\infty  
 \EE_{D^n}\left\{ \boldone_{\calE_r^c}  + \boldone_{\calE_r} [\pi_\infty(\calF_r^c) + \pi_\infty(
\{ W \in \calF_r:  \calL(W) - \calL(\fstar) > 3 r \ustar \})]\right\}
\dd r\notag \\
\leq & 
3 \ustar
+
3 \ustar 
 \int_1^\infty  
\min\left\{
2 e^{- c' \min\{\beta \epsilonstar^2, n \epsilonstar^{2(2-s)}\}r}
+
3 \exp\left( - \tfrac{1}{2}(r - 2)\beta\epsilonstar^2\right)
,1 
\right\}
\dd r \notag\\
\lesssim & \ustar,
\label{eq:LDiffbound}
\end{align}
where in the last inequality, we used $\beta \epsilonstar^2 \geq \log(2)$ by \Eqref{eq:epsilonstarPhiBetaLamCond} and $n\epsilonstar^{2(2-s)} \geq 1$ by the definition of $\epsilonstar$.
%

{\it \bf Step 5: Evaluation of $\ustar$.}

Based on the arguments above, our goal is reduced to evaluating $\ustar$.
We note that 
\begin{align*}
\EE[\hat{r}(u)^2]
& 
\leq u^{s} + \Rbar \psi_{r,\epsilonstar}(u)
\leq u^s + \Rbar \EE\left[\int_{0}^{\hat{r}(u)} \sqrt{\frac{ \beta \epsilonstar^2 r + \left(\epsilon'/\lambdabeta^{1/2} (B(1+RD))^{-1} \right)^{-2\alphatil/\theta}}{n}} \dd \epsilon' \right] \\
& 
\lesssim
u^s + 
\sqrt{\frac{ \beta \epsilonstar^2 r}{n} \EE[\hat{r}^2(u)]}
 +
\frac{1}{\sqrt{n}}
\lambdabeta^{\alphatil/\theta}
\EE[(\hat{r}(u))^{1-\alphatil/\theta} ] \\
& 
\leq 
u^s + 
\sqrt{\frac{ \beta \epsilonstar^2 r}{n} \EE[\hat{r}^2(u)]}
 +
\frac{1}{\sqrt{n}}
\lambdabeta^{\alphatil/\theta}
(\EE[\hat{r}(u)^2])^{(1-\alphatil/\theta)/2}.
\end{align*}
Therefore, we have that 
\begin{align*}
\EE[\hat{r}(u)^2] \lesssim 
u^s \vee \frac{ \beta}{n}\epsilonstar^2 r
\vee n^{-\frac{1}{1+\alphatil/\theta}} \lambdabeta^{\frac{2\alphatil/ \theta}{1+\alphatil/\theta}}.
\end{align*}
This gives that 
\begin{align*}
\psi_{r,\epsilonstar}(u) 
\lesssim
& \EE\left[\int_{0}^{\hat{r}(u)} \sqrt{\frac{ \beta \epsilonstar^2 r + \left(\epsilon'/\lambdabeta^{1/2} (B(1+RD))^{-1} \right)^{-2\alphatil/\theta}}{n}} \dd \epsilon' \right] \\
\lesssim
& 
\sqrt{\frac{ \beta \epsilonstar^2 r}{n} \EE[\hat{r}(u)^2] }
+ \frac{1}{\sqrt{n}}
\lambdabeta^{\alphatil/\theta}
(\EE[\hat{r}(u)^2])^{(1-\alphatil/\theta)/2} \\
\lesssim &
\frac{ \beta}{n} \epsilonstar^2 r  + n^{-\frac{1}{1+\alphatil/\theta}} \lambdabeta^{\frac{2\alphatil/ \theta}{1+\alphatil/\theta}} 
+
u^{s/2}\sqrt{\frac{ \beta \epsilonstar^2 r}{n} + n^{-\frac{1}{1+\alphatil/\theta}} \lambdabeta^{\frac{2\alphatil/ \theta}{1+\alphatil/\theta}}}
~~~~(\because \text{Young's inequality}).
\end{align*}

Here, we let the upper bound in the right hand side as $\bar{\psi}_{r,\epsilonstar}$, then 
we can easily show the condition \eqref{eq:barpsiConditions}, that is, 
$\bar{\psi}_{r,\epsilonstar}(4u) \leq 2 \bar{\psi}_{r,\epsilonstar}(u)$ and 
 $\frac{\bar{\psi}_{r,\epsilonstar}(u r)}{u r} \leq \frac{\bar{\psi}_{1,\epsilonstar}(u)}{u}$.
Finally, by the definition of $\ustar$, we obtain that 
\begin{align*}
\ustar \lesssim 
\epsilonstar^2 \vee 
\left(\frac{ \beta }{n} \epsilonstar^2  + n^{-\frac{1}{1+\alphatil/\theta}} \lambdabeta^{\frac{2\alphatil/ \theta}{1+\alphatil/\theta}} \right)^{\frac{1}{2-s}}
\vee \frac{1}{n}.
\end{align*}
This yields the assertion.
\end{proof}

\subsection{Proof of fast rate for classification (Theorem \ref{thm:ClassificationFastRate})}
\label{sec:ProofOfClassificationError}

\begin{proof}

Let the convergence rate in the right hand side of \Eqref{eq:ExpectedLossConvRate} in Theorem \ref{thm:ExcessRiskConvRate} be $\ustar$.

Since both $\bar{W}_2(a)$ and $\bar{W}_1(w)$ are bounded and the activation function $\sigma$ is included in the H\"older class $\calC^m(\Real)$, the model $\{f_W \mid W \in \cH\}$ is also included in the H\"older class $\calC^{m}(\calX)$ with regularity $m$ and especially it is included in the Sobolev space $W_2^m(\calX)$: 
\begin{align*}
f_W \in W_2^m(\calX). 
\end{align*}
Moreover, since the logistic loss is $C^\infty$-class and its derivative up to $m$-th order is upper bounded,
the function 
$
x \mapsto \ell(f_W(x), y) 
$
is also included in $W^{m}(\calX)$ for all $y\in \{\pm 1\}$.
Therefore, $\hat{h}_W(x) := \EE_{Y|x}[ \ell(f_W(x), Y)] (=h(f_W(x)|x))$ is also included in $W^{m}(\calX)$.
Moreover, $\|\hat{h}_W\|_{W_2^m(\calX)} \leq C $ uniformly over all $W \in \cH$.

 If $X_0$ and $X_1$ are a pair of quasi-normed spaces which are continuously embedded in a linear Hausdorff space $\calG$, their $K$-functional is defined for any $f \in X_0+X_1$ by
$$
K(f,t; X_0,X_1) := \inf_{f=f_0 + f_1} \|f_0\|_{X_0} + \|f_1\|_{X_1}.
$$
For each $0 < \theta < 1$, $0 < p \leq \infty$, the {\it interpolation space} $[X_0,X_1]_{q,\theta}$ is the set of all functions $f \in X_0 + X_1$ for which
$$
\|f\|_{[X_0,X_1]_{q,\theta}} := \left(\int_0^\infty (t^{-\theta}K(f,t;X_0,X_1))^q \frac{\dd t}{t} \right)^{1/q}
$$
is finite. For $q = \infty$, the right hand side is properly modified in a usual manner.
As shown by \cite{devore1993besov}, it holds that 
$$
[L_2(\Omega),W_2^m(\calX)]_{1,d/2m} = B_{2,1}^{d/2}(\calX),
$$
where $L_2(\calX)$ is the $L_2$-space on $\calX$ with respect to the Lebesgue measure and 
$B_{2,1}^{d/2}(\calX)$ is the Besov space defined on $\calX$ (see \cite{devore1993besov} for its definition). 
Note that $d/2m < 1$ by the assumption.
From this property, combined the extension theorem of \cite{devore1993besov} and the embedding property of the Besov space \cite{triebel1983theory}, we have that $B_{2,1}^{d/2}(\calX) \hookrightarrow L_\infty(\calX)$. 
Under this condition, 
it is known that the following inequality holds 
\begin{align*}
\|\hat{h}_W - \hstar\|_\infty \leq 
& C \|\hat{h}_W - \hstar\|_{L_2(\calX)}^{1-\frac{d}{2m}}
\|\hat{h}_W - \hstar\|_{W_2^m(\calX)}^{\frac{d}{2m}} \\ \leq 
& C c_0^{-(1-d/2m)} \|\hat{h}_W - \hstar\|_{\LPiPx}^{1-\frac{d}{2m}}
\|\hat{h}_W - \hstar\|_{W_2^m(\calX)}^{\frac{d}{2m}},
\end{align*}
(see \cite{Book:Bennett+Sharpley:88,COLT:Steinwart+etal:2009}).
Combining this with the assumption $\hstar \in W_2^m(\calX)$ and the fact $\|\hat{h}_W\|_{W_2^m(\calX)} \leq C $,
if $\|\hat{h}_W - \hstar\|_{\LPiPx} \leq \epsilon$ for sufficiently small $\epsilon$, we have an $L_\infty$-norm bound as 
$\|\hat{h}_W - \hstar \|_\infty \leq C' \epsilon^{1-\frac{d}{2m}}$.
Thus, if we choose $\epsilon$ so that $\epsilon^{1-\frac{d}{2m}} = \Theta(\delta)$ and 
let $W$ satisfy $\|\hat{h}_W - \hstar\|_{\LPiPx} \leq \epsilon$,
then we can have
\begin{align*}
\|\hat{h}_W - \hstar\|_\infty < \delta/2.
\end{align*}
Then, by the assumption that $\hstar(x) \leq \log(2) - \delta$, it holds that 
\begin{align*}
\hat{h}_W(x)  < \log(2) - \delta/2~~~(\text{a.s.}),
\end{align*}
which indicates that 
\begin{align*}
P_X(\sign(f_W(X)) =\gstar(X))=1.
\end{align*}
Therefore, we only need to bound the quantity $\|\hat{h}_{W_k} - \hstar\|_{\LPiPx}^2$ for $W_k \sim \pi_k$.

Here, we show that the Bernstein condition (Assumption \ref{ass:BernsteinLightTail}) is satisfied with $s=1$ under Assumptions 
\ref{ass:StrongLowNoise}.
By Assumptions 
\ref{ass:StrongLowNoise} and $\|f_W\|_\infty \leq R$ for any $W \in \cH$ by the definition of the clipping operator,
it holds that $\|\fstar\|_\infty \leq R$.
Therefore, Lemma \ref{lem:LogisticBernstein} yields the Bernstein condition with $s=1$ and $C_B = 4 + 3R$. 
Therefore, 
$\|\hat{h}_W - \hstar\|_{\LPiPx}^2$ can be bounded as 
\begin{align*}
& 
\|\hat{h}_W - \hstar\|_{\LPiPx}^2 \\
\leq & 
\EE_Z[(\ell(f_W,Z) - \ell(\fstar,Z))^2 ] \\
\leq & 
C_B 
(\calL(W) - \calL(\fstar))^s  
= 
C_B
(\calL(W) - \calL(\fstar)),
\end{align*}
where we used Jensen's inequality in the first inequality, and we applied $s=1$ in the last equality.  
First, we consider the stationary distribution.
For any $\epsilon' > \ustar$, we have already shown in the proof of Theorem \ref{thm:ExcessRiskConvRate} (See \Eqref{eq:LDiffbound}) that
\begin{align}
&
\EE_{D^n}\left[\pi_\infty( \{W \in \cH \mid  \calL(W) - \calL(\fstar) \geq  \epsilon'\}) \right] \notag \\
\leq &  C \exp(- c \beta \ustar \times (\epsilon' /\ustar ))
= C \exp(- c \beta \epsilon' ).
\label{eq:piinftyCexpBetaExpDash}
\end{align}
Next, we consider the intermediate solution $W_k$. 
Suppose that the sample size $n$ is sufficiently large and $\lambda$ is appropriately chosen 
with sufficiently large $\beta$ so that 
$u^* \ll \delta^{2m/(2m-d)}$\footnote{This is a more precise meaning of  the sentence ``the sample size $n$ is sufficiently large and $\lambda$ is appropriately chosen'' in the statement.}.
The probability of misclassification is bounded by 
%
\begin{align*}
& \EE[\pi_k(\{ W_k \in \cH \mid P_X(\sign(f_{W_k}(X)) = \sign(\fstar(X))) \neq 1\})]  \\
\leq 
& \EE\left[P_{W_k \sim \pi_k}[ \calL(W_k) - \calL(\fstar)  \geq  \epsilon/C_B ]\right] \\
=
& \EE\left[P_{W_k \sim \pi_k,W\sim \pi_\infty}[ \calL(W_k) - \calL(W) - (\calL(\fstar)- \calL(W))   \geq  \epsilon/C_B ]\right] \\
\leq
& \EE\left[P_{W_k \sim \pi_k,W\sim \pi_\infty}[ \calL(W_k) - \calL(W)    \geq  \epsilon/(2C_B) ] \right] 
+
\EE\left[ P_{W\sim \pi_\infty}[ \calL(W) - \calL(\fstar)  \geq \epsilon/(2C_B)] \right] 
\\
\leq 
& \EE\left[\EE_{W_k \sim \pi_k,W\sim \pi_\infty}[ \calL(W_k) - \calL(W)   ]/ (\epsilon/(2C_B))  \right] 
+
\EE\left[ P_{W\sim \pi_\infty}[ \calL(W) - \calL(\fstar) \geq \epsilon/(2C_B)] \right] 
\\
\lesssim &
\frac{\Xi_k}{\delta^{2m/(2m-d)}} + \exp(- c' \beta \delta^{2m/(2m-d)}),
\end{align*}
where we used $\epsilon = \Theta(\delta^{2m/(2m-d)})$ and \Eqref{eq:piinftyCexpBetaExpDash} in the last inequality.
Therefore, for a fixed $\delta$, we can obtain the Bayes classifier with high probability by setting $\eta$ sufficiently small and taking sufficiently large $k$.

\vspace{0.2cm}
\noindent {\bf Making the first term as large as the second term.}

We see that the first term in the right hand side is coming from the bound of $\EE\left[P_{W_k \sim \pi_k,W\sim \pi_\infty}[ \calL(W_k) - \calL(W)    \geq  \epsilon/(2C_B) ] \right]$.
To bound this, we used the following bound:
\begin{align*}
P_{W_k \sim \pi_k,W\sim \pi_\infty}[ \calL(W_k) - \calL(W)    \geq  \epsilon/(2C_B) ]
& \leq \EE_{W_k \sim \pi_k,W\sim \pi_\infty}[ \calL(W_k) - \calL(W)   ]/ (\epsilon/(2C_B)) \\
& \lesssim \frac{\Xi_k}{\delta^{2m/(2m-d)}},
\end{align*}
almost surely.
Therefore, if $\Xi_k$ is sufficiently small such that $\frac{\Xi_k}{\delta^{2m/(2m-d)}} \ll 1$, 
then we have
\begin{align*}
P_{W_k \sim \pi_k,W\sim \pi_\infty}[ \calL(W_k) - \calL(W)    \geq  \epsilon/(2C_B) ] \leq 1/2.
\end{align*}
Therefore, by running the algorithm $S$-times and picking up the beset $W_k$ in terms of the validation error
(write it as $W_k^{(S)}$), 
then we have that 
\begin{align*}
P_{W_k \sim \pi_k,W\sim \pi_\infty}[ \calL(W_k^{(S)}) - \calL(W)    \geq  \epsilon/(2C_B) ] \leq 1/2^S.
\end{align*}
Thus, for sufficiently large $S$ such that the right hand side can be smaller than the second term $\exp(- c' \beta \delta^{2m/(2m-d)})$, we have that 
\newcommand{\relmiddle}[1]{\mathrel{}\middle#1\mathrel{}}
\begin{align*}
& \EE
\left\{
P_{W_k^{(S)}}\left[ P_X(\sign(f_{W_k^{(S)}}(X)) = \sign(\fstar(X))) \neq 1 \relmiddle| D^n \right]\right\}  
\lesssim 
\exp(- c' \beta \delta^{2m/(2m-d)}).
\end{align*}
\end{proof}

\begin{Lemma}\label{lem:LogisticBernstein}
Suppose that $\|\fstar\|_\infty \leq R$ and $\sup_W \|f_W\|_\infty \leq R$.
Then, the logistic loss satisfies the Bernstein condition with $s=1$ and $C_B = 4 + 3 R$. 
\end{Lemma}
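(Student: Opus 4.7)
The plan is to prove the Bernstein condition pointwise in $x$ and then integrate over $X$. Fixing $x$ and setting $p = P(Y=1\mid X=x)$, $u^* = \fstar(x)$, $u = f_W(x)$, I will use that $\fstar$ minimizes $\calL$ over all measurable functions to get $p = \sigma(u^*)$ (the logistic-loss minimizer identity). Letting $q = \sigma(u)$ and $\Delta := \ell(Y,u) - \ell(Y,u^*)$, a direct computation shows $\Delta = \log(p/q)$ on $\{Y=1\}$ and $\Delta = \log((1-p)/(1-q))$ on $\{Y=-1\}$, which yields
\[
\EE[\Delta \mid x] = \mathrm{KL}(p\|q) =: T(x), \qquad \Var[\Delta \mid x] = p(1-p)(u - u^*)^2,
\]
the variance identity coming from $\log(p/q) - \log((1-p)/(1-q)) = u^* - u$. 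Hence $\EE[\Delta^2 \mid x] = T(x)^2 + p(1-p)(u-u^*)^2$, and the goal is to upper bound this by $(4+3R)\,T(x)$.

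I will then split the bound into two pieces. First, since $\ell(y,v) \leq \log(1+e^R) \leq R+1$ for $|v| \leq R$, the pointwise excess risk satisfies $T(x) \leq R+1$, so $T(x)^2 \leq (R+1)\,T(x)$. Second, the variance term must be controlled by $p(1-p)(u-u^*)^2 \leq (2R+3)\,T(x)$. Summing the two bounds gives $(R+1) + (2R+3) = 4+3R$, as required, and taking expectation over $X$ completes the lemma.

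The main obstacle is the variance bound, since a naive strong-convexity estimate $\bar{\ell}_x''(v) \geq \sigma(R)(1-\sigma(R)) \sim e^{-R}$ would only yield an exponential constant $C_B = O(e^R)$. My approach is to use Taylor's formula with integral remainder to write $T(x) = (u-u^*)^2 \int_0^1 (1-t)\,g(t)\,dt$, where $g(t) := \sigma'(u^* + t(u-u^*))$; since $g(0) = p(1-p)$, the desired inequality is equivalent to $\int_0^1 (1-t)\,g(t)/g(0)\,dt \geq 1/(2R+3)$. The polynomial dependence on $R$ is enabled by the log-Lipschitz bound
\[
g(t)/g(0) \geq \exp(-t|u-u^*|),
\]
which is immediate from $|(\log \sigma')'(v)| = |1 - 2\sigma(v)| \leq 1$. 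Setting $D := |u-u^*| \leq 2R$, I compute $\int_0^1 (1-t) e^{-tD}\,dt = (D - 1 + e^{-D})/D^2$, a decreasing function of $D$. It then suffices to verify the elementary inequality $(2R+3)(2R - 1 + e^{-2R})/(4R^2) \geq 1$, equivalently $f(R) := 4R - 3 + (2R+3)e^{-2R} \geq 0$ for all $R \geq 0$; this follows from $f(0) = 0$, $f'(0) = 0$, and $f''(R) = 4(2R+1)e^{-2R} > 0$.
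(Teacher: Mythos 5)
Your proof is correct, and it takes a genuinely different route from the paper's. Both start from the same pointwise reduction (fixing $x$, writing $p=\sigma(\fstar(x))$, $q=\sigma(f_W(x))$) and both aim at the same constant $C_B=4+3R$, but the mechanisms differ. The paper works directly with $\EE[\Delta^2\mid x] = p[\log(p/q)]^2 + (1-p)[\log((1-p)/(1-q))]^2$, rewrites the target $C_B T(x)$ as a sum of two manifestly non-negative terms via $T(x) = p[\log(p/q)+\tfrac{1}{p}(q-p)]+(1-p)[\log(\tfrac{1-p}{1-q})-\tfrac{1}{1-p}(q-p)]$, and then proves each of the two resulting scalar inequalities by a four-way case split on how $q$ compares to $p$ (the cases $q\in[p/2,p]$, $q<p/2$, $q\in[p,2p]$, $q>2p$). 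You instead use the orthogonal decomposition $\EE[\Delta^2\mid x]=T(x)^2+\Var[\Delta\mid x]$, kill the $T(x)^2$ term trivially by the uniform bound $T(x)\le\log(1+e^R)\le R+1$, and control the variance term by Taylor's formula with integral remainder combined with the log-Lipschitz inequality $|(\log\sigma')'|\le 1$; this reduces the problem to a single one-variable calculus inequality on $[0,2R]$, which you close with a clean second-derivative argument. Both approaches exploit the same key structural fact — that the logistic link makes the curvature of the conditional risk vary at most exponentially in the distance, so that the Bernstein constant can be made polynomial rather than exponential in $R$ — but your version isolates that fact as a single lemma ($\sigma'(v')/\sigma'(v)\ge e^{-|v-v'|}$) and avoids casework entirely, which makes it shorter and arguably more transparent; the paper's casework, on the other hand, is completely elementary and does not require passing through the variance decomposition or the integral form of Taylor's theorem.
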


\begin{proof}

Since $\|\fstar\|_\infty \leq R$, 
it holds that $|\hstar(x)| = |h(\fstar(x)|x)|$ is also bounded by $R$ (a.s.).
Here, we fix $x \in \calX$ and write $p = P(Y=1|X=x)$.
By the optimality of $\fstar(x)$, we have that $p = \frac{1}{1 + \exp(-\fstar(x))}$.
Accordingly, we denote $q = \frac{1}{1 + \exp(-f_W(x))}$ for any $W \in \cH$.

Then, what we need to show is that 
\begin{align}\label{eq:DesiredLowerBoundCB}
p \left[\log\left(\frac{p}{q}\right) \right]^2 + (1-p) \left[\log\left(\frac{1-p}{1-q}\right) \right]^2
\leq C_B \left\{ p \log\left(\frac{p}{q}\right) + (1-p)\log\left(\frac{1-p}{1-q}\right)  \right\}.
\end{align}
The right hand side can be rewritten as
\begin{align*}
 p \left[ \log\left(\frac{p}{q}\right) + \frac{1}{p}(q-p)\right] + 
(1-p)\left[ \log\left(\frac{1-p}{1-q}\right) - \frac{1}{1-p}(q-p) \right],
\end{align*}
and by noticing the convexity of $-\log(\cdot)$, each term of the right hand side is non-negative. 
We show the inequality \eqref{eq:DesiredLowerBoundCB} by showing 
\begin{align}
& \left[\log\left(\frac{p}{q}\right) \right]^2 \leq C_B  \left[ \log\left(\frac{p}{q}\right) + \frac{1}{p}(q-p)\right],  
\label{eq:FirstBoundCB}
\\
& 
\left[\log\left(\frac{1-p}{1-q}\right) \right]^2 \leq C_B \left[ \log\left(\frac{1-p}{1-q}\right) - \frac{1}{1-p}(q-p) \right].
\label{eq:SecondBoundCB}
\end{align}
Without loss of generality, we may assume $p \leq 1/2$.

\noindent 
{\it \bf Step 1: Proof of \Eqref{eq:FirstBoundCB}}.
We show the inequality by considering the following four settings (i) $p/2 \leq q \leq p$, 
(ii) $q < p/2$, 
(iii) $p\leq q \leq 2 p$, (iv) $2p < q$. 
Let $f_1(q) = \log(p/q) + \frac{1}{p}(q-p)$ and $f_2(q)= \left[ \log(p/q)\right]^2$.

\noindent (i) ($p/2 \leq q \leq p$)
 Since $f_1(q)$ is a convex function satisfying $ \frac{\dd^2}{\dd q^2}f_1(q) = 1/q^2 \geq 1/p^2~(\forall q \leq p)$, $f_1(p) = 0$ and $f_1(q) \geq 0$, 
it holds that 
$f_1(q) \geq \frac{1}{p^2}(q-p)^2$ for all $q \leq p$.
On the other hand, $0 \leq \log(p/q) \leq \frac{2}{p}(p-q)$ for $p/2 \leq q \leq p$, it holds that $f_2(q) \leq \frac{4}{p^2} (q-p)^2~(p/2 \leq \forall q \leq p)$.
These inequalities yield 
$$
4 f_1(q) \geq f_2(q)~~(p/2 \leq \forall q \leq p).
$$

\noindent (ii)  ($q < p/2$).
Since $f_1(q) \geq \frac{1}{p^2} (p-q)^2~(\forall q \leq p)$
and $-\frac{1}{p}(q-p) \leq 2 \frac{1}{p^2}(q-p)^2 \leq 2 f_1(q)~(\forall q \leq p/2)$, 
we have that 
$$
- \frac{1}{3}\log(p/q) \leq  \frac{1}{p}(q-p) \leq 0~~
~(\forall q \leq p/2).$$
Therefore, by the definition of $f_1$, we have 
$$
f_1(q) \geq \frac{2}{3} \log(p/q) \geq \frac{2}{3 \log(p/q)}  [\log(p/q) ]^2
\geq \frac{2}{3 \log(1+\exp(R))}f_2(q),
$$
where we used $p \leq 1$ and $q = \frac{1}{1 + \exp(-f_W(x))} \geq \frac{1}{1 + \exp(R)}$.

\noindent (iii) ($p\leq q \leq 2 p$).
In this setting, the convexity of $-\log(\cdot)$ gives $0 \geq \log(p/q) = -\log(q/p) \geq \frac{1}{p}(p - q)$. 
Therefore, it holds that $f_2(q) \leq \frac{1}{p^2}(p-q)^2$. On the other hand, 
since $f_2''(q) = \frac{1}{q^2} \geq \frac{1}{4 p^2}$, it holds that $f_1(x) \geq \frac{1}{4 p^2} (p-q)^2$.
Therefore, we have 
$$
4 f_1(q) \geq f_2(q)~~ (p\leq  \forall q \leq 2 p).
$$

\noindent (iv) ($2 p < q $).
By the convexity of $-\log(q)$, we have that 
\begin{align*}
 - \frac{1}{\log(2)} \log(q/p) 
&  \geq 
 \frac{1}{\log(2)} [- \log(q) +\log(p)] 
 \geq 
  \frac{1}{\log(2)} [ - \log(2p) - \frac{1}{2p}(q-2p) + \log(p)] \\
&   = 
\frac{1}{\log(2)} [ - \log(2) - \frac{1}{2p}(q-2p) ] 
= - 1 - \frac{1}{2 \log(2) p}(q-2p) \\
&\geq - \frac{1}{p} (q-2p) - 1 = - \frac{1}{p} (q-p)~~~(\forall q > 2p),
\end{align*}
where we used $2\log(2) \geq 1$.
This yields that 
$$
f_1(q) \geq \left(1 - \frac{1}{\log(2)} \right) \log(p/q) = \frac{1 - \log(2)}{\log(2)} \log(q/p) \geq 0~~(\forall q > 2p).
$$
(Remember that $\log(2) < 1$).
Therefore, 
\begin{align*}
f_1(q) & \geq  
\frac{1 - \log(2)}{\log(2) \log(q/p)} [\log(q/p)]^2
\geq 
\frac{1 - \log(2)}{\log(2) \log(1+\exp(R))} [\log(q/p)]^2 \\
& =\frac{1 - \log(2)}{\log(2) \log(1+\exp(R))}  f_2(q)
~~~~(\forall q > 2p),
\end{align*}
where we used $q \leq 1$ and $p \geq \frac{1}{1 + \exp(R)}$.

\noindent {\it \bf Step 2: Proof of \Eqref{eq:SecondBoundCB}}.
This is shown completely in the same manner with the proof of \Eqref{eq:FirstBoundCB}
by setting $p \leftarrow 1-p$ and $q \leftarrow 1 - q$.

\noindent {\it \bf Step 3.}
Combining the results of Step 1 and Step 3, we have the equations \eqref{eq:FirstBoundCB} and \eqref{eq:SecondBoundCB} 
with $$
C_B = \max\left\{ 4, \frac{3}{2} \log(1 + \exp(R)), \frac{\log(2)}{1-\log(2)} \log(1 + \exp(R))\right\}
\leq 4 + 3 R, 
$$
where we used $\frac{3}{2} \leq \frac{\log(2)}{1-\log(2)} \leq 3$, 
$\log(1 + \exp(R)) \leq \log(2) + R$ by the Lipschitz continuity of $x \mapsto \log(1+\exp(x))$, and $\frac{\log^2(2)}{1-\log(2)} \leq 2$.
Therefore, by resetting $C_B = 4 + 3 R$, we obtain the assertion.
\end{proof}

\subsection{Derivation of the fast rate of regression (\Eqref{eq:RegressionFastRate})}\label{App:RegressionExcessRiskDerivation}

Since $\fstar$ is realized by $f_{\Wstar}$, $\|f_W\|_{\infty} \leq R$ for any $W \in \calH$ and $|\epsilon_i| \leq C$,
we have that 
$$
\ell(Y,f_W(X)) = (Y - f_W(X))^2 = (f_{\Wstar} + \epsilon - f_W(X))^2 \leq 2 [(f_{\Wstar}(X)  - f_W(X))^2 + \epsilon^2]
\leq 2 (4 R^2 + C^2).
$$
Therefore, the assumption $0 \leq \ell(Y,f_W(X)) \leq \Rbar~(\forall W \in \cH)$ (a.s.) is obtained by $\Rbar = 2 (4 R^2 + C^2)$. Other assumptions in

Write $\cHKtilt := \calH_{K^{\theta(\gamma+1)}}$.
As we have stated in the main text, we can show the ``bias'' and ``variance'' terms can be bounded as 
\begin{align*}
& \inf_{h \in \cHKtil: \calL(h) - \calL(\fstar) \leq \epsilon^2} \lambdabeta \|h\|^2_{\cHKtil}
\lesssim \lambdabeta \epsilon^{- \frac{2(1-\theta)}{\theta}}, \\
& - \log \nutil(\{h \in \cH: \|h\|_{\cH} \leq \epsilon \})
\lesssim (\epsilon/\lambdabeta^{1/2} )^{-\frac{2\alphatil}{1 - \alphatil}}.
\end{align*}
The variance term has been already evaluated in \Eqref{eq:SmallBallConcentProb}.
Now, we evaluate the bias term. By the definitions of $\cHKtil$, 
$\Wstar \in \cHKtil$ means that there exists $(a_k)_{k=0}^\infty$ such that 
\begin{align*}
\Wstar = \sum_{k=0}^\infty  \sqrt{\mu_k^{\theta(\gamma+1)}} a_k e_k,~~\text{and}~~\sum_{k=0}^\infty a_k^2 < \infty.
\end{align*}
Here, we denote $Q := \sum_{k=0}^\infty a_k^2$. 
Now, let $\tilde{W} = \sum_{k=0}^N  \sqrt{\mu_k^{\theta(\gamma+1)}} a_k e_k$ for some $N \in \Natural$ as an approximator of $\Wstar$.
Then, its norm in $\cHKtil$ can be evaluated as 
\begin{align}\label{eq:WtilNormBound}
\|\tilde{W}\|^2_{\cHKtil} = \sum_{k=0}^N \mu_k^{-(\gamma+1)} \mu_k^{\theta(\gamma+1)} a_k^2 = 
\sum_{k=0}^N \mu_k^{(\theta - 1)(\gamma+1)} a_k^2.
\end{align}
We evaluate the discrepancy between $\Wstar$ and $\tilde{W}$ and evaluate its norm in $\calH$.
Since $\Wstar - \tilde{W} = \sum_{k=N+1}^\infty \sqrt{\mu_k^{\theta(\gamma+1)}} a_k e_k
$, its $\calH$-norm is given by 
\begin{align}\label{eq:WWtilDisc}
\|\Wstar - \tilde{W}\|_{\calH}^2 = \sum_{k=N+1}^\infty \mu_k^{\theta(\gamma+1)} a_k^2.
\end{align}
Note that $\calL(f_{\tilde{W}}) - \calL(\fstar) = \|f_{\tilde{W}} - \fstar \|^2_{\LPiPx} \leq (1+RD)^2 \|\tilde{W} - \Wstar\|_{\cH}^2$ by Lemma \ref{lem:LipshitzInfty}. 
Therefore, to ensure $\calL(f_{\tilde{W}}) - \calL(\fstar)\leq \epsilon^2$, it suffices to let $(1+RD)^2 \|\tilde{W} - \Wstar\|^2_{\cH }\leq \epsilon^2$.
By \Eqref{eq:WWtilDisc}, this means that $\sum_{k=N+1}^\infty \mu_k^{\theta(\gamma+1)} a_k^2 \leq \epsilon^2/(1+RD)^2$.
Here, note that 
\begin{align*}
\|\Wstar - \tilde{W}\|_{\calH}^2 = \sum_{k=N+1}^\infty \mu_k^{\theta(\gamma+1)} a_k^2  \leq \mu_{N+1}^{\theta(\gamma+1)} \sum_{k=N+1}^\infty a_k^2
\leq c_\mu^{\theta(\gamma + 1)} (N+2)^{- 2\theta(\gamma + 1)} Q.
\end{align*}
Hence, by setting $N \propto \epsilon^{- 1/[\theta(\gamma + 1)]}$, we can let $\calL(f_{\tilde{W}}) - \calL(\fstar) \leq \epsilon^2$. In this setting of $N$, 
by noticing $(k+1)^{-2 (\theta -1)(\gamma + 1)} = (k+1)^{2 (1 - \theta)(\gamma + 1)}$ is monotonically increasing with respect to $k$,
\Eqref{eq:WtilNormBound} gives that
\begin{align*}
 \|\tilde{W}\|_{\cHKtil}^2 & \leq \sum_{k=0}^N c_\mu^{(\theta -1)(\gamma + 1)} (k+1)^{-2 (\theta -1)(\gamma + 1)} a_k^2 \\
& \leq c_\mu^{(\theta -1)(\gamma + 1)} (N+1)^{2 (1 - \theta)(\gamma + 1) } Q 
\lesssim \epsilon^{- 2 (1 - \theta)/\theta },
\end{align*}
which gives the bias term bound.


Combining the bias and variance terms, we may choose $\epsilonstar$ as the infimum of $\epsilon$ such that 
$
\lambdabeta \epsilon^{- \frac{2(1-\theta)}{\theta}} + (\epsilon/\lambdabeta^{1/2} )^{-\frac{2\alphatil}{1 - \alphatil}}
\leq \beta \epsilon^2.
$
That is, we have that 
\begin{align*}
\epsilonstar^2 \lesssim \max\left\{\lambdabeta^{-\alphatil} \beta^{-(1-\alphatil)}, \left(\frac{\lambdabeta}{\beta}\right)^{\theta},n^{-\frac{1}{2-s}}\right\}
=  \max\left\{\lambda^{-\alphatil} \beta^{-1}, \lambda^{\theta},n^{-\frac{1}{2-s}}\right\}.
\end{align*}

\end{document}